\newtheorem{theorem}{Theorem}[section]
\newcommand{\commentout}[1]{}
\newcommand{\junk}[1]{}
\newcommand{\etal}{\emph{et al.}}
\newcommand{\ba}{\boldsymbol{a}}
\newcommand{\bw}{\boldsymbol{w}}
\newcommand{\bx}{\boldsymbol{x}}
\newcommand{\bA}{\boldsymbol{A}}
\newcommand{\bS}{\boldsymbol{S}}
\newcommand{\bone}{\boldsymbol{1}}
\newcommand{\cH}{\mathcal{H}}
\newcommand{\EE}{\mathbb{E}}
\newcommand{\RR}{\mathbb{R}}
\newcommand{\argmax}{\mathrm{argmax}}
\newcommand{\bOne}{\mathds{1}}
\newcommand{\abs}[1]{\left| #1 \right|}
\newcommand{\pihat}{\hat{\pi}}
\newcommand{\vhat}{\hat{V}}
\newcommand{\htilde}{\tilde{h}}
\newcommand{\si}{\text{I}}
\newcommand{\sip}{\text{IP}}
\newcommand{\spbm}{\text{PBM}}
\newcommand{\sr}{\text{R}}
\newcommand{\subl}{\text{L}}
\renewcommand\footnotetextcopyrightpermission[1]{} 
\begin{document}

\title{Offline Evaluation of Ranking Policies with Click Models}

\author{Shuai Li}
\affiliation{\institution{The Chinese University of Hong Kong}}
\email{shuaili@cse.cuhk.edu.hk}

\author{Yasin Abbasi-Yadkori}
\affiliation{\institution{Adobe Research}}
\email{abbasiya@adobe.com}

\author{Branislav Kveton}
\affiliation{\institution{Adobe Research}}
\email{kveton@adobe.com}

\author{S. Muthukrishnan}
\affiliation{\institution{Rutgers University}}
\email{muthu@cs.rutgers.edu}

\author{Vishwa Vinay}
\affiliation{\institution{Adobe Research}}
\email{vinay@adobe.com}

\author{Zheng Wen}
\affiliation{\institution{Adobe Research}}
\email{zwen@adobe.com}

\renewcommand{\shortauthors}{Shuai Li et al.}

\begin{abstract}
Many web systems rank and present a list of items to users, from recommender systems to search and advertising. An important problem in practice is to evaluate new ranking policies offline and optimize them before they are deployed. We address this problem by proposing evaluation algorithms for estimating the expected number of clicks on ranked lists from historical logged data. The existing algorithms are not guaranteed to be statistically efficient in our problem because the number of recommended lists can grow exponentially with their length. To overcome this challenge, we use models of user interaction with the list of items, the so-called click models, to construct estimators that learn statistically efficiently. We analyze our estimators and prove that they are more efficient than the estimators that do not use the structure of the click model, under the assumption that the click model holds. We evaluate our estimators in a series of experiments on a real-world dataset and show that they consistently outperform prior estimators.
\end{abstract}

%
%


\maketitle


\section{Introduction}
\label{sec:introduction}

Many web applications, including search, advertising, and recommender systems, generate ranked lists of items and present them to users. Items can be web pages, movies, or products. The industry standard for evaluating the quality of recommended lists is online \emph{A/B testing} \cite{siroker2013b}. Because A/B testing may impact the experience of users, it is typically used only as the final validation step, while \emph{offline evaluation} is employed in earlier stages \cite{chuklin2013click}. The benefit of offline evaluation is that poor new policies can be identified before they are deployed, and in turn A/B testing can be done more safely and intelligently.

We study the problem of offline evaluation for estimating the expected number of clicks on lists generated by some policy $h$. This evaluation is done with respect to a previously \emph{logged dataset $S$}, which records user interactions with lists generated by a \emph{logging production policy} $\pi$ \cite{langford2008exploration,strehl2010learning,li2011unbiased}. Existing algorithms for evaluating $h$ are not guaranteed to be statistically efficient in our setting because they look for exact matches of lists in $S$. Roughly speaking, since they rely on variants of importance sampling on lists and the number of lists can be exponential in their length, these algorithms may need exponentially many samples from $\pi$ to perform well.

To overcome this problem of statistical inefficiency, we make structural assumptions on how clicks are generated. In particular, we propose novel unbiased estimators for the expected number of clicks on a list based on well-known click models \cite{chuklin2015click}. The \emph{click model} is a model of user interaction with a ranked list of items, and how the user clicks on these items. Many click models have been proposed, and they represent a spectrum of complexity and accuracy. To illustrate the gain in statistical efficiency from using click models, suppose that all items attract the user independently and that the probability of clicking on an item depends only on its identity. Then it is more efficient to construct a separate estimator for the probability of clicking on each item and then combine these estimators to estimate the expected number of clicks on any given list. Such an estimator would be valid as long as our assumed click model holds, and only requires historical click data.

This paper makes four major contributions:
\begin{enumerate}
\item We formulate the problem of offline evaluation of ranked lists under different click models.

\item We propose clipped importance sampling estimators for a range of click models, from simple where the clicks are independent of both the item and position, to more realistic where the clicks depend on both the item and its position. All estimators are simple and can be computed with a single pass over historical logged data.

\item We analyze the properties of our estimators. We show that they have a lower bias than those that ignore the structure of the list, and that the best policy under our estimators has a higher lower bound on its value. The analysis is under the assumption that our modeling assumptions hold.

\item We evaluate our estimators on a large-scale real-world click dataset. A series of experiments shows that our estimators are consistently better than clipped importance sampling at the list level.
\end{enumerate}

We adopt the following notation. Random variables are denoted by boldface letters, lists by uppercase $A$, and items in the list by lowercase $a$.


\section{Setting}
\label{sec:setting}

Let $E = \{1, \dots, L\}$ be a \emph{ground set} of $L$ \emph{items}, such as web pages. The items are presented to the user in a \emph{list} of length $K$. Formally, the list is a \emph{$K$-permutation} of $E$, which is chosen from set
\begin{align*}
  \Pi_K(E) = \{(a_1, \dots, a_K): a_1, \dots, a_K \in E; \, a_i \neq a_j \text{ for any } i \neq j\}.
\end{align*}
We assume the following general model of user interaction with a list of items. The responses of the user depend on \emph{context} $x \in X$, which is drawn from some distribution over all contexts $X$. More specifically, let $D(\cdot \mid x)$ be the conditional probability distribution over $\{0, 1\}^{|E| \times K}$ given context $x$. Then a sample from this distribution, $\bw \sim D(\cdot \mid x)$, is a matrix where $\bw(a, k)$ indicates that the user would have \emph{clicked} on item $a$ at position $k$. The expected value of $\bw$ given $x$, $\bar{w}(\cdot \mid x) = \EE_{\bw \sim D(\cdot \mid x)}[\bw]$, is a matrix of conditional click probabilities given $x$. We refer to $\bar{w}(a, k \mid x)$ as the \emph{probability of clicking} on item $a$ at position $k$ given $x$.

A \emph{policy} $\pi$ is a conditional probability distribution of a list given context $x$. We denote this distribution by $\pi(\cdot \mid x)$. The policy interacts with the environment as follows. At time $t$, the environment draws \emph{context} $\bx_t$ and \emph{click realizations} $\bw_t \sim D(\cdot \mid \bx_t)$. The policy observes $\bx_t$ and selects list $\bA_t = (\ba^t_1, \dots, \ba^t_K) \in \prod_K(E)$ according to $\pi(\cdot \mid \bx_t)$, where $\ba^t_k$ is the item at position $k$ at time $t$. Finally, the environment reveals the vector of \emph{item rewards} $(\bw_t(\ba^t_k, k))_{k = 1}^K$, one entry for each displayed item and its position. The other entries of $\bw_t$ are unobserved. The \emph{reward} of list $\bA_t$ is the sum of the observed entries of $\bw_t$. We define it as $f(\bA_t, \bw_t)$, where
\begin{align*}
  f(A, w) = \sum_{k = 1}^K w(a_k, k)
\end{align*}
for any list $A = (a_1, \dots, a_K)$ and $w \in [0, 1]^{|E| \times K}$. It follows that the \emph{expected reward} of list $A$ in context $x$ is
\begin{align*}
  \EE_{\bw \sim D(\cdot \mid x)} [f(A, \bw)] = f(A, \bar{w}(\cdot \mid x))\,.
\end{align*}
Let $\pi$ be a \emph{logging policy}, which interacts with the environment in $n$ steps and generates a \emph{logged dataset}
\begin{align}
  S = \{(x_t, A_t, w_t)\}_{t = 1}^n\,,
  \label{eq:logged dataset}
\end{align}
where $A_t = (a^t_1, \dots, a^t_K)$ and $w_t(a, k)$ is observed only if $a = a^t_k$. We define the \emph{value of policy $h$} as
\begin{align*}
  V(h)
  & = \EE_{\bx} \left[ \EE_{\bA \sim h(\cdot \mid \bx), \bw \sim D(\cdot \mid \bx)} [f(\bA,\bw)] \right] \\
  & = \EE_{\bx} \left[ \EE_{\bA \sim h(\cdot \mid \bx)} f(\bA, \bar{w}(\cdot \mid \bx)) \right]\,.
\end{align*}
Our objective is to estimate $V(h)$ from the logged dataset $S$ in \eqref{eq:logged dataset}. It is common in practice that the logging policy $\pi$ is unknown and has to be estimated \cite{strehl2010learning}. We denote the \emph{estimated logging policy} by $\pihat$, and the corresponding conditional probability distribution of a list given context $x$ by $\pihat(\cdot \mid x)$.


\section{Estimators}
\label{sec:estimators}

In this section, we introduce estimators of $V(h)$ that are motivated by the models of user behavior with a displayed list of items, the so-called click models \cite{chuklin2015click}. We propose multiple estimators, from simple to relatively sophisticated, each mirroring a commonly used click model. Simpler estimators are expected to generalize better, under the assumption that the corresponding click model holds.

\subsection{List Estimator}
\label{sec:list estimator}

We start with a list-level estimator, which does not leverage the structure of lists and serves as a baseline for our proposed estimators. Recall that $\pi(A \mid x)$ is the probability that policy $\pi$ chooses list $A$ in response to context $x$.

Let $h(\cdot \mid x)$ be absolutely continuous with respect to $\pi(\cdot \mid x)$, that is $h(A \mid x) = 0$ when $\pi(A \mid x) = 0$. Then
\begin{align}
  V(h)
  & = \EE_{\bx} \left[\EE_{\bA \sim h(\cdot \mid \bx), \bw \sim D(\cdot \mid \bx)} [f(\bA, \bw)]\right]
  \notag \\
  & = \EE_{\bx} \left[\EE_{\bA \sim \pi(\cdot \mid \bx), \bw \sim D(\cdot \mid \bx)}
  \left[f(\bA, \bw) \frac{h(\bA \mid \bx)}{\pi(\bA \mid \bx)}\right]\right]\,,
  \label{eq:absCon}
\end{align}
where the second equality is from the absolute continuity of $h(\cdot \mid x)$ and $h(A \mid x) / \pi(A \mid x)$ is the \emph{importance weight}. The above change-of-measure trick is known as \emph{importance sampling} \cite{andrieu03introduction} and we use it in many forms throughout this paper.

The issue with importance sampling is that $h(A \mid x) / \pi(A \mid x)$ can take large values, which affects the variance of $V(h)$. Therefore, importance sampling estimators are clipped in practice \cite{strehl2010learning,bottou2013counterfactual}. In this work, we define the \emph{list estimator} as
\begin{align*}
  \vhat_\subl(h) =
  \frac{1}{|S|} \sum_{(x, A, w) \in S} f(A, w) \min \left\{\frac{h(A \mid x)}{\pihat(A \mid x)}, M\right\}
\end{align*}
for any estimated logging policy $\pihat$, logged dataset $S$, and \emph{clipping constant} $M > 0$. Roughly speaking, the value of policy $h$ on logged dataset $S$ is the sum of clicks on logged lists scaled by their importance weights.

The value of $M$ trades off the bias and variance of the estimator. As $M \to \infty$, the estimator becomes unbiased but its variance may be huge. As $M \to 0$, the variance of the estimator approaches zero because $\vhat_\subl(h) \to 0$ for any logged dataset $S$.

\subsection{Item-Position (IP) Click Model}
\label{sec:item-position estimator}

A popular assumption in click models is that the probability of clicking on item $a$ at position $k$, $\bar{w}(a, k \mid x)$, depends only on the item and its position \cite{chuklin2015click}. Let
\begin{align*}
  \pi(a, k \mid x) = \sum_A \pi(A \mid x) \bOne\{a_k = a\}
\end{align*}
be the probability that item $a$ is displayed at position $k$ by policy $\pi$ in context $x$. Then a similar importance sampling trick to \eqref{eq:absCon} yields
\begin{align}
  \!\!\! V(h)
  & = \EE_{\bx} \left[\EE_{\bA \sim h(\cdot \mid \bx), \bw \sim D(\cdot \mid \bx)} [f(\bA, \bw)]\right]
  \label{eq:ip first} \\
  & = \EE_{\bx} \left[\EE_{\bA \sim h(\cdot \mid \bx)} \left[\sum_{k = 1}^K \bar{w}(\ba_k, k \mid \bx)\right]\right]
  \notag \\
  & = \EE_{\bx} \left[\sum_A h(A \mid \bx) \sum_{k = 1}^K \sum_{a \in E} \bar{w}(a, k \mid \bx) \bOne\{a_k = a\}\right]
  \notag \\
  & = \EE_{\bx} \left[\sum_{k = 1}^K \sum_{a \in E} \bar{w}(a, k \mid \bx)
  \sum_A h(A \mid \bx) \bOne\{a_k = a\}\right]
  \label{eq:ip independence} \\
  & = \EE_{\bx} \left[\sum_{k = 1}^K \sum_{a \in E} \bar{w}(a, k \mid \bx) \, h(a, k \mid \bx)\right]
  \label{eq:ip routine} \\
  & = \EE_{\bx} \left[\sum_{k = 1}^K \sum_{a \in E} \bar{w}(a, k \mid \bx) \, \pi(a, k \mid \bx)
  \frac{h(a, k \mid \bx)}{\pi(a, k \mid \bx)}\right]
  \label{eq:ip importance weight} \\
  & = \EE_{\bx} \left[\EE_{\bA \sim \pi(\cdot \mid \bx), \bw \sim D(\cdot \mid \bx)}
  \left[\sum_{k = 1}^K \bw(\ba_k, k) \frac{h(\ba_k, k \mid \bx)}{\pi(\ba_k, k \mid \bx)}\right]\right]\,,
  \label{eq:ip final}
\end{align}
where \eqref{eq:ip independence} is from our assumption that $\bar{w}(a, k \mid \bx)$ only depends on item $a$ and position $k$; \eqref{eq:ip importance weight} introduces the importance weight; and \eqref{eq:ip final} follows from identities \eqref{eq:ip first} to \eqref{eq:ip routine}, which are applied in the reverse order to $\pi$ instead of $h$.

Following the above derivation, we define the \emph{item-position (IP) estimator} as
\begin{align*}
  \vhat_\sip(h) =
  \frac{1}{|S|} \sum_{(x, A, w) \in S} \sum_{k = 1}^K w(a_k, k)
  \min \left\{\frac{h(a_k, k \mid x)}{\pihat(a_k, k \mid x)}, M\right\}
\end{align*}
for any estimated logging policy $\pihat$, logged dataset $S$, and clipping constant $M > 0$. Roughly speaking, the value of policy $h$ on logged dataset $S$ is the sum of clicks on logged item-position pairs scaled by their importance weights.

This estimator is expected to be more statistically efficient than $\vhat_\subl$ (\cref{sec:list estimator}) because it depends on fewer importance weights. In particular, the number of the weights in $\vhat_\sip$ is $O(K |E|)$ and the number of the weights in $\vhat_\subl$ is $O(|\Pi_K(E)|)$. The downside of the IP model, in fact of any model, is that it may not hold.

\subsection{Random Click Model (RCM)}
\label{sec:random estimator}

The random click model (RCM) is a variant of the IP model (\cref{sec:item-position estimator}) where the click probability is independent of both the item and its position, that is
\begin{align*}
  \bar{w}(a, k \mid x) = \bar{w}(a', k' \mid x)
\end{align*}
for any $a$, $a'$, $k$, $k'$, and $x$. This model is discussed in Section 3.1 of Chuklin \etal~\cite{chuklin2015click}. Under this model, $f(A, \bar{w}(\cdot \mid x))$ is independent of $A$ and therefore $V(h) = V(\pi)$ for any policy $h$. It follows that the value of $h$ can be estimated as
\begin{align*}
  \vhat_\text{Random}(h) =
  \frac{1}{|S|} \sum_{(x, A, w) \in S} \sum_{k = 1}^K w(a_k, k)\,,
\end{align*}
which is the average number of clicks collected by logging policy $\pi$. Although the above estimator is simplistic, it is hard to beat in practice when the responses of users do not change significantly with the policy. We return to this issue in \cref{sec:yandex bias}.

\subsection{Rank-Based Click Model (RCTR)}
\label{sec:position estimator}

The rank-based click model (RCTR) is also a variant of the IP model (\cref{sec:item-position estimator}) where the click probability is independent of the item, that is
\begin{align*}
  \bar{w}(a, k \mid x) = \bar{w}(a', k \mid x)
\end{align*}
for any $a$, $a'$, $k$, and $x$. This model is discussed in Section 3.2 of Chuklin \etal~\cite{chuklin2015click}. Under this model, the click probability can only depend on the position of the item. However, because all lists are displayed over the same $K$ positions, $f(A, \bar{w}(\cdot \mid x))$ is independent of $A$ and therefore $V(h) = V(\pi)$ for any policy $h$. It follows that the value of $h$ can be estimated as
\begin{align*}
  \vhat_\sr(h) =
  \frac{1}{|S|} \sum_{(x, A, w) \in S} \sum_{k = 1}^K w(a_k, k)\,,
\end{align*}
which is the average number of clicks collected by logging policy $\pi$. Note that this is the same estimator as $\vhat_\text{Random}$ in \cref{sec:random estimator}. Therefore, in the rest of the paper, we only refer to $\vhat_\sr$.

\subsection{Position-Based Click Model (PBM)}
\label{sec:pbm estimator}

The position-based click model (PBM) is a variant of the IP model (\cref{sec:item-position estimator}) where the probability of clicking on item $a$ at position $k$ factors as
\begin{align}
  \bar{w}(a, k \mid x) = \mu(a \mid x) \, p(k \mid x)\,,
  \label{eq:pbm model}
\end{align}
where $\mu(a \mid x)$ is the conditional probability of clicking on item $a$ in context $x$ given that its position is examined and $p(k \mid x)$ is the \emph{examination probability} of position $k$ in context $x$. This model was introduced as the \emph{examination hypothesis} in Craswell \etal~\cite{craswell2008} and is discussed in Section 3.3 of Chuklin \etal~\cite{chuklin2015click}. Joachims \etal~\cite{joachims2005accurately} showed that the examination probability of an item often depends heavily on its position. Note that the PBM is \emph{very different} from the rank-based click model in \cref{sec:position estimator}.

Suppose that the examination probabilities of all positions are known and let $p_x = (p(1 \mid x), \dots, p(K \mid x))$ be the vector of these probabilities. For any vectors $u$ and $v$, let $\langle u, v\rangle$ denote their dot product. Then the expected value of any policy $h$ in the PBM can be expressed as
\begin{align*}
  V(h)
  & = \EE_{\bx} \left[\sum_{a \in E} \mu(a \mid \bx) \sum_{k = 1}^K p(k \mid \bx) \, h(a, k \mid \bx)\right] \\
  & = \EE_{\bx} \left[\sum_{a \in E} \mu(a \mid \bx) \, \langle p_{\bx}, \pi(a, \cdot \mid \bx)\rangle
  \frac{\langle p_{\bx}, h(a, \cdot \mid \bx)\rangle}{\langle p_{\bx}, \pi(a, \cdot \mid \bx)\rangle}\right] \\
  & = \EE_{\bx} \left[\EE_{\bA \sim \pi(\cdot \mid x), \bw \sim D(\cdot \mid \bx)}
  \left[\sum_{k = 1}^K \bw(\ba_k, k)
  \frac{\langle p_{\bx}, h(\ba_k, \cdot \mid \bx)\rangle}{\langle p_{\bx}, \pi(\ba_k, \cdot \mid \bx)\rangle}\right]
  \right]\,,
\end{align*}
where the first equality is from identities \eqref{eq:ip first} to \eqref{eq:ip routine} in \cref{sec:item-position estimator} and our modeling assumption in \eqref{eq:pbm model}; the second equality introduces the importance weight; and the last equality is from identities \eqref{eq:ip first} to \eqref{eq:ip routine}, which are applied in the reverse order to $\pi$ instead of $h$.

Following the above derivation, we define the \emph{PBM estimator} as
\begin{align*}
  \vhat_\spbm(h) =
  \frac{1}{|S|} \sum_{(x, A, w) \in S} \sum_{k = 1}^K w(a_k, k)
  \min \left\{\frac{\langle p_x, h(a_k, \cdot \mid x)\rangle}
  {\langle p_x, \pihat(a_k, \cdot \mid x)\rangle}, M\right\}
\end{align*}
for any estimated logging policy $\pihat$, logged dataset $S$, and clipping constant $M > 0$.

This estimator is expected to be more statistically efficient than $\vhat_\sip$ (\cref{sec:item-position estimator}) because it depends on fewer importance weights. In particular, the number of the weights in $\vhat_\spbm$ is $O(|E|)$ and the number of the weights in $\vhat_\sip$ is $O(K |E|)$. The downside of the PBM is that it is more restrictive than the IP model.

\subsection{Document-Based Click Model (DCTR)}
\label{sec:item estimator}

The document-based click model (DCTR), which was introduced as the \emph{baseline hypothesis} in Craswell \etal~\cite{craswell2008}, assumes that the probability of clicking on an item depends only on its relevance, that is
\begin{align*}
  \bar{w}(a, k \mid x) = \bar{w}(a, k' \mid x)
\end{align*}
for any $a$, $k$, $k'$, and $x$. Note that this assumption can be viewed as a special case of \eqref{eq:pbm model}. In particular, it is equivalent to assuming that $p(k \mid x) = 1$ for any $k$ and $x$; or that $p_x = \bone_K$ for any $x$, where $\bone_K$ is a vector of all ones of length $K$. Therefore, the value of any policy $h$ in the DCTR can be estimated using $\vhat_\spbm$. In particular, it is
\begin{align*}
  \vhat_\si(h) =
  \frac{1}{|S|} \sum_{(x, A, w) \in S} \sum_{k = 1}^K w(a_k, k)
  \min \left\{\frac{\langle\bone_K, h(a_k, \cdot \mid x)\rangle}
  {\langle\bone_K, \pihat(a_k, \cdot \mid x)\rangle}, M\right\}
\end{align*}
for any estimated logging policy $\pihat$, logged dataset $S$, and clipping constant $M > 0$. In this work, we refer to this estimator as the \emph{item estimator}.

\subsection{Summary}
\label{sec:summary}

\begin{table}[t]
  \begin{tabular}{|l|l|} \hline
    Click model & Assumption \\ \hline
    RCM & $\bar{w}(a, k \mid x)$ is independent of both item $a$ and \\
    & position $k$ \\
    RCTR & $\bar{w}(a, k \mid x)$ only depends on position $k$ \\
    DCTR & $\bar{w}(a, k \mid x)$ only depends on item $a$ \\
    PBM & $\bar{w}(a, k \mid x) = \mu(a \mid x) \, p(k \mid x)$ \\ \hline
  \end{tabular}
  \vspace{0.1in}
  \caption{Dependence of click probabilities $\bar{w}(a, k \mid x)$ on item $a$ and its position $k$ in different click models.}
  \label{tab:click models}
\end{table}

We propose several offline estimators for the average number of clicks on lists of items generated by policy $h$. The main reason for studying multiple estimators is that the logged dataset $S$ in \eqref{eq:logged dataset} is finite. When $S$ is small, a simpler estimator may be more accurate because it depends on fewer importance weights, which can be estimated more accurately from less data. On the other hand, when $S$ is large, a more sophisticated estimator may be more accurate because it can capture all nuances of $S$. This is the so-called \emph{bias-variance tradeoff} and we demonstrate it empirically in \cref{sec:illustrative query}. The independence assumptions in our estimators are summarized in \cref{tab:click models}.

We refer to the \emph{item}, \emph{IP}, and \emph{PBM} estimators as being \emph{structured}, because their importance weights depend on individual items in the list. In comparison, the importance weights in the list estimator (\cref{sec:list estimator}) are at the level of the list. Our structured estimators are expected to use logged data more efficiently and we show this empirically in \cref{sec:experiments}. We analyze statistical properties of our estimators in the next section.


\newcommand{\I}[1]{\mathds{1} \! \left\{#1\right\}}

\section{Analysis}
\label{sec:analysis}

In this section, we analyze our estimators from \cref{sec:estimators}. Our more general estimators in \cref{sec:weighted click estimators} can be analyzed similarly.

This section is organized as follows. In \cref{sec:more unbiased}, we show that our structured estimators are unbiased in a larger class of policies than the list estimator. In \cref{sec:lower bias}, we show that our structured estimators estimate the value of any policy with a lower bias than the list estimator. In \cref{sec:policy optimization}, we show that the best policy under our structured estimators has a higher value than that under the list estimator. All of our results are derived under the assumption that the corresponding click model holds.

\subsection{Unbiased in a Larger Class of Policies}
\label{sec:more unbiased}

All structured estimators in \cref{sec:estimators} are unbiased in a larger class of policies than the list estimator, under the assumptions that the logging policy is known, $\pihat = \pi$, and that the corresponding click model holds. We prove this for the IP estimator below.

\begin{proposition}
\label{prop:more unbiased} Fix any $M > 0$ and a class of policies $\cH$. Let
\begin{align*}
  \cH_\subl = \{h \in \cH: h(A \mid x) / \pi(A \mid x) \le M \text{ for all } A, x\}
\end{align*}
be the subset of policies where $\vhat_\subl$ is unbiased, the importance weights are not clipped for any $h \in \cH_\subl$. Let
\begin{align*}
  \cH_\sip = \{h \in \cH: h(a, k \mid x) / \pi(a, k \mid x) \le M \text{ for all } a, k, x\}
\end{align*}
be the subset of policies where $\vhat_\sip$ is unbiased, the importance weights are not clipped for any $h \in \cH_\sip$. Then in the IP model (\cref{sec:item-position estimator}), $\cH_\subl \subseteq \cH_\sip$ .
\end{proposition}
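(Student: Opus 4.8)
The plan is to establish the set inclusion by showing that the pointwise list-level importance-weight bound defining $\cH_\subl$ forces the coarser item-position-level bound defining $\cH_\sip$. The essential observation is that the item-position marginals are \emph{nonnegative linear combinations} of the list probabilities, so a uniform multiplicative bound survives marginalization.

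First I would rewrite both membership conditions as multiplicative inequalities rather than ratios, which sidesteps any division-by-zero issues. For $h \in \cH_\subl$, the defining condition $h(A \mid x) / \pi(A \mid x) \le M$ is understood together with the absolute continuity required for $\vhat_\subl$ to be well defined, so it is equivalent to $h(A \mid x) \le M\,\pi(A \mid x)$ for every list $A$ and context $x$; this reading also covers the degenerate case $\pi(A \mid x) = 0$, which forces $h(A \mid x) = 0$.

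Next I would fix a context $x$, an item $a$, and a position $k$, and recall the marginal defined in \cref{sec:item-position estimator}, $\pi(a, k \mid x) = \sum_A \pi(A \mid x)\bOne\{a_k = a\}$, together with its analogue $h(a, k \mid x) = \sum_A h(A \mid x)\bOne\{a_k = a\}$. Multiplying the list-level inequality $h(A \mid x) \le M\,\pi(A \mid x)$ by the nonnegative indicator $\bOne\{a_k = a\}$ and summing over all lists $A$ yields
\begin{align*}
  h(a, k \mid x) = \sum_A h(A \mid x)\bOne\{a_k = a\} \le M \sum_A \pi(A \mid x)\bOne\{a_k = a\} = M\,\pi(a, k \mid x)\,.
\end{align*}
Whenever $\pi(a, k \mid x) > 0$ this gives $h(a, k \mid x) / \pi(a, k \mid x) \le M$ directly; when $\pi(a, k \mid x) = 0$ the same inequality forces $h(a, k \mid x) = 0$, so the item-position weight is again unclipped. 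Since $a$, $k$, and $x$ were arbitrary, we conclude $h \in \cH_\sip$, and hence $\cH_\subl \subseteq \cH_\sip$.

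I do not expect a genuine obstacle here: once the ratio constraints are read as linear inequalities, the argument reduces to a one-line marginalization driven by the nonnegativity of the indicator weights. The only point requiring care is the bookkeeping around vanishing denominators, which is precisely why I would phrase both membership conditions multiplicatively from the outset rather than manipulating quotients.
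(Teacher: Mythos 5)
Your proof is correct and takes essentially the same route as the paper: both arguments rest on the marginalization identities $h(a, k \mid x) = \sum_{A:\, a_k = a} h(A \mid x)$ and $\pi(a, k \mid x) = \sum_{A:\, a_k = a} \pi(A \mid x)$ combined with the per-list bound. The paper expresses the item-position ratio as a convex combination of list-level ratios each at most $M$, whereas you sum the equivalent linear inequalities $h(A \mid x) \le M\, \pi(A \mid x)$ over lists; your multiplicative phrasing is a minor (and slightly cleaner) variant that avoids dividing by possibly vanishing quantities.
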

\begin{proof}
The proof follows from the observation that
\begin{align*}
  \frac{h(a, k \mid x)}{\pi(a, k \mid x)}
  & = \frac{\sum_{A:\, a_k = a} h(A \mid x)}{\sum_{A':\, a'_k = a} \pi(A' \mid x)} \\
  & = \sum_{A:\, a_k = a} \frac{h(A \mid x)}{\pi(A \mid x)} \frac{\pi(A \mid x)}{\sum_{A':\, a'_k = a} \pi(A' \mid x)} \\
  & \leq M
\end{align*}
holds for any $a$, $k$, and $x$.
\end{proof}

\noindent Similar claims can be derived analogously for both the item and PBM estimators. In particular, let $Y \in \{\si, \spbm\}$ and $\cH_Y \subseteq \cH$ be the subset of policies where $\vhat_Y$ is unbiased, where $\cH_Y$ is defined analogously to $\cH_\sip$. Then $\cH_\subl \subseteq \cH_\sip \subseteq \cH_Y$, under the assumption that the corresponding click model holds. We omit the proofs of these claims due to space constraints.

\subsection{Lower Bias in Estimating Policy Values}
\label{sec:lower bias}

All structured estimators in \cref{sec:estimators} estimate the value of any policy with a lower bias than the list estimator, under the assumptions that the logging policy is known, $\pihat = \pi$, and that the corresponding click model holds. We prove this for the IP estimator below.

\begin{proposition}
\label{prop:ip less biased} Fix any $M > 0$ and policy $h$. Then in the IP model (\cref{sec:item-position estimator}), the IP estimator $\vhat_\sip$ has a lower downside bias than the list estimator $\vhat_\subl$,
\begin{align*}
  \EE_{\bS} [\vhat_\subl(h)] \le
  \EE_{\bS} [\vhat_\sip(h)] \le
  V(h)\,.
\end{align*}
\end{proposition}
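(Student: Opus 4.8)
The plan is to evaluate $\EE_{\bS}[\vhat_\subl(h)]$ and $\EE_{\bS}[\vhat_\sip(h)]$ in closed form, reduce the chain of inequalities to a statement that holds pointwise in the context, and close it with an elementary superadditivity property of $\min$. First I would use that the logged tuples $(\bx_t,\bA_t,\bw_t)$ are i.i.d.\ with $\bA_t\sim\pi(\cdot\mid\bx_t)$ and $\bw_t\sim D(\cdot\mid\bx_t)$, so each estimator's expectation equals the expectation of a single summand. Since the clipped weights depend only on $(\bA_t,\bx_t)$, I would integrate out the clicks first via $\EE_{\bw\sim D(\cdot\mid x)}[\bw(a,k)]=\bar{w}(a,k\mid x)$, which yields
\begin{align*}
  \EE_{\bS}[\vhat_\subl(h)] &= \EE_{\bx}\Big[\sum_A \pi(A\mid\bx)\Big(\sum_{k=1}^K \bar{w}(a_k,k\mid\bx)\Big)\min\Big\{\frac{h(A\mid\bx)}{\pi(A\mid\bx)},M\Big\}\Big], \\
  \EE_{\bS}[\vhat_\sip(h)] &= \EE_{\bx}\Big[\sum_A \pi(A\mid\bx)\sum_{k=1}^K \bar{w}(a_k,k\mid\bx)\min\Big\{\frac{h(a_k,k\mid\bx)}{\pi(a_k,k\mid\bx)},M\Big\}\Big].
\end{align*}

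The right inequality is then immediate: because $\bar{w}\ge 0$ and $\min\{z,M\}\le z$, replacing each clipped weight in $\EE_{\bS}[\vhat_\sip(h)]$ by the unclipped ratio only increases the expression, and the result is exactly the item-position form of $V(h)$ in \eqref{eq:ip final}. Hence $\EE_{\bS}[\vhat_\sip(h)]\le V(h)$.

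The substance is the left inequality, and this is where I expect the main obstacle. Since both sides are outer expectations over $\bx$ of nonnegative integrands, it suffices to prove the bound for each fixed context $x$, and after swapping the sums over $A$ and $k$, separately for each position $k$. Fixing $x$ and $k$, I would group the lists by the item they place at position $k$. The key use of the IP model is that $\bar{w}(a_k,k\mid x)$ equals $\bar{w}(a,k\mid x)$ for every list with $a_k=a$, so this nonnegative factor is list-independent within a group and pulls out of the inner sum; without this, the regrouping fails. After rewriting $\pi(A)\min\{h(A)/\pi(A),M\}=\min\{h(A),M\pi(A)\}$ and using $h(a,k\mid x)=\sum_{A:\,a_k=a}h(A\mid x)$ together with the analogous identity for $\pi$, it remains to show, for each group,
\begin{align*}
  \sum_{A:\,a_k=a}\min\{h(A\mid x),\,M\pi(A\mid x)\}
  \ \le\ \min\Big\{\sum_{A:\,a_k=a}h(A\mid x),\ M\!\!\sum_{A:\,a_k=a}\!\!\pi(A\mid x)\Big\}.
\end{align*}
This is the elementary inequality $\sum_i\min\{u_i,v_i\}\le\min\{\sum_i u_i,\sum_i v_i\}$, valid because the left side is simultaneously bounded by $\sum_i u_i$ and by $\sum_i v_i$. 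Reattaching the factor $\bar{w}(a,k\mid x)$, summing over items $a$ and positions $k$, and taking the expectation over $\bx$ then gives $\EE_{\bS}[\vhat_\subl(h)]\le\EE_{\bS}[\vhat_\sip(h)]$. The only real care needed is the bookkeeping that keeps $\bar{w}$ correctly attached through the regrouping; everything else collapses to the one-line $\min$ inequality.
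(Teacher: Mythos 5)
Your proposal is correct and follows essentially the same route as the paper's own proof: both rewrite the two expectations by integrating out clicks, use the IP assumption to pull $\bar{w}(a,k\mid x)$ out of each group $\{A : a_k = a\}$, rewrite $\pi(A\mid x)\min\{h(A\mid x)/\pi(A\mid x),M\} = \min\{h(A\mid x), M\pi(A\mid x)\}$, and close with the superadditivity inequality $\sum_i \min\{u_i,v_i\} \le \min\{\sum_i u_i, \sum_i v_i\}$ applied per item--position group. Your treatment of the second inequality (unclipping only increases the expression, recovering the unbiased IP form of $V(h)$) is also the same observation the paper invokes, just spelled out more explicitly.
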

\begin{proof}
The second inequality follows from the observation that the clipping of importance weights leads to a downside bias. The first inequality is proved as follows. First, we note that
\begin{align*}
  & \EE_{\bS} [\vhat_\subl(h)] = \\
  & \quad \EE_{\bx} \left[ \sum_{a \in E} \sum_{k=1}^K \bar{w}(a, k \mid \bx)
  \sum_{A:\, a_k = a} \min\left\{h(A \mid \bx), M \pi(A \mid \bx)\right\} \right]\,, \\
  & \EE_{\bS} [\vhat_\sip(h)] = \\
  & \quad \EE_{\bx} \left[ \sum_{a \in E} \sum_{k=1}^K \bar{w}(a, k \mid \bx)
  \min\left\{h(a, k \mid \bx), M \pi(a, k \mid \bx)\right\} \right]\,.
\end{align*}
So, we can prove that $\EE_{\bS} [\vhat_\subl(h)] \le \EE_{\bS} [\vhat_\sip(h)]$ by showing that
\begin{align*}
  \sum_{A:\, a_k = a} \min\left\{h(A \mid x), M \pi(A \mid x)\right\} \le \min \left\{h(a, k \mid x), M \pi(a, k \mid x)\right\}
\end{align*}
holds for any $a$, $k$, and $x$. The above claim follows from
\begin{align*}
  h(a, k \mid x) =
  \hspace{-0.05in} \sum_{A:\, a_k = a} \hspace{-0.05in} h(A \mid x) \geq
  \hspace{-0.05in} \sum_{A:\, a_k = a} \hspace{-0.05in} \min\left\{h(A \mid x), M \pi(A \mid x)\right\}
\end{align*}
and
\begin{align*}
  M \pi(a, k \mid x) =
  \hspace{-0.05in} \sum_{A:\, a_k = a} \hspace{-0.05in} M \pi(A \mid x) \geq
  \hspace{-0.05in} \sum_{A:\, a_k = a} \hspace{-0.05in} \min\left\{h(A \mid x), M \pi(A \mid x)\right\}\,.
\end{align*}
This concludes our proof.
\end{proof}

\noindent Both the item estimator $\vhat_\si$ and PBM estimator $\vhat_\spbm$ are even less biased than the IP estimator $\vhat_\sip$.

\begin{proposition}
\label{prop:item less biased} Fix any $M > 0$ and policy $h$. Then in the DCTR (\cref{sec:item estimator}),
\begin{align*}
  \EE_{\bS} [\vhat_\subl(h)] \le
  \EE_{\bS} [\vhat_\sip(h)] \le
  \EE_{\bS} [\vhat_\si(h)] \le
  V(h)\,.
\end{align*}
\end{proposition}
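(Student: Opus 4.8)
The plan is to reuse \cref{prop:ip less biased} for the two outer inequalities and to isolate the genuinely new claim, $\EE_{\bS}[\vhat_\sip(h)] \le \EE_{\bS}[\vhat_\si(h)]$. First I would note that the DCTR is the special case of the IP model obtained by forcing $\bar{w}(a, k \mid x)$ to be independent of $k$; write $\bar{w}(a \mid x)$ for this common value. Hence \cref{prop:ip less biased} applies verbatim and already gives the leftmost inequality $\EE_{\bS}[\vhat_\subl(h)] \le \EE_{\bS}[\vhat_\sip(h)]$. The rightmost inequality $\EE_{\bS}[\vhat_\si(h)] \le V(h)$ follows from the same downside-bias observation used for the second inequality of \cref{prop:ip less biased}: clipping can only shrink a nonnegative weight, so $\min\{c, M d\} \le c$ for $c, d \ge 0$, and under DCTR $V(h) = \EE_{\bx}\bigl[\sum_{a \in E} \bar{w}(a \mid \bx)\,\langle \bone_K, h(a, \cdot \mid \bx)\rangle\bigr]$ by \eqref{eq:ip routine}.

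The crux is the middle inequality. I would first put $\EE_{\bS}[\vhat_\si(h)]$ in closed form by exactly the steps of \cref{sec:item-position estimator}: take the expectation over $\bw$ (using $\EE[\bw(a,k)] = \bar{w}(a \mid \bx)$ under DCTR), then over $\bA \sim \pi(\cdot \mid \bx)$, and collect terms via $\sum_A \pi(A \mid \bx)\bOne\{a_k = a\} = \pi(a, k \mid \bx)$ together with $\sum_{k=1}^K \pi(a, k \mid \bx) = \langle \bone_K, \pi(a, \cdot \mid \bx)\rangle$. This yields
\begin{align*}
  \EE_{\bS}[\vhat_\si(h)] = \EE_{\bx}\left[\sum_{a \in E} \bar{w}(a \mid \bx)\min\left\{\langle \bone_K, h(a, \cdot \mid \bx)\rangle,\, M\langle \bone_K, \pi(a, \cdot \mid \bx)\rangle\right\}\right].
\end{align*}
Specializing the expression for $\EE_{\bS}[\vhat_\sip(h)]$ from \cref{prop:ip less biased} to the DCTR, where $\bar{w}(a, k \mid \bx) = \bar{w}(a \mid \bx)$ factors out of the position sum, gives
\begin{align*}
  \EE_{\bS}[\vhat_\sip(h)] = \EE_{\bx}\left[\sum_{a \in E} \bar{w}(a \mid \bx)\sum_{k=1}^K \min\left\{h(a, k \mid \bx),\, M\pi(a, k \mid \bx)\right\}\right].
\end{align*}

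With both expectations in hand, the middle inequality reduces to a single pointwise estimate: for every $a$ and $x$,
\begin{align*}
  \sum_{k=1}^K \min\left\{h(a, k \mid x),\, M\pi(a, k \mid x)\right\} \le \min\left\{\langle \bone_K, h(a, \cdot \mid x)\rangle,\, M\langle \bone_K, \pi(a, \cdot \mid x)\rangle\right\}.
\end{align*}
This is just superadditivity of the minimum, $\sum_k \min\{p_k, q_k\} \le \min\{\sum_k p_k, \sum_k q_k\}$: each summand is bounded by $p_k$ and by $q_k$, so the full sum is bounded by $\sum_k p_k$ and by $\sum_k q_k$, hence by their minimum. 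Multiplying by $\bar{w}(a \mid x) \ge 0$, summing over $a$, and taking $\EE_{\bx}$ then delivers $\EE_{\bS}[\vhat_\sip(h)] \le \EE_{\bS}[\vhat_\si(h)]$.

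The main obstacle is not the inequality itself but using the DCTR assumption in exactly the right place: it is precisely the $k$-independence of $\bar{w}$ that lets $\bar{w}(a \mid x)$ be pulled outside the sum over positions, so that the two expectations differ only through the two sides of the superadditivity bound. Without this, the position-weighted sums would not align and the sharper comparison would break, which is why the extended chain holds in the DCTR rather than in the general IP model.
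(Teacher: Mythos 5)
Your proof is correct and is exactly the paper's intended argument: the paper omits this proof, saying only that it proceeds "similarly to \cref{prop:ip less biased}," and your derivation --- closed-form expectations under $\pihat = \pi$, followed by the pointwise superadditivity bound $\sum_{k=1}^K \min\{h(a, k \mid x), M \pi(a, k \mid x)\} \le \min\{\langle \bone_K, h(a, \cdot \mid x)\rangle, M \langle \bone_K, \pi(a, \cdot \mid x)\rangle\}$ --- is the same aggregate-then-clip comparison used there, applied over positions within an item rather than over lists within an item-position pair. Reusing \cref{prop:ip less biased} for the leftmost inequality is a legitimate shortcut, since the DCTR is a special case of the IP model, and your use of the DCTR assumption to pull $\bar{w}(a \mid x)$ out of the position sum is placed exactly where it is needed.
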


\begin{proposition}
\label{prop:pbm less biased} Fix any $M > 0$ and policy $h$. Then in the PBM (\cref{sec:pbm estimator}),
\begin{align*}
  \EE_{\bS} [\vhat_\subl(h)] \le
  \EE_{\bS} [\vhat_\sip(h)] \le
  \EE_{\bS} [\vhat_\spbm(h)] \le
  V(h)\,.
\end{align*}
\end{proposition}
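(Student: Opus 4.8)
The plan is to establish the three inequalities separately, reusing as much of the existing machinery as possible; the outer two are essentially free. The first inequality, $\EE_{\bS}[\vhat_\subl(h)] \le \EE_{\bS}[\vhat_\sip(h)]$, follows immediately from \cref{prop:ip less biased}: the PBM assumption \eqref{eq:pbm model} implies that $\bar{w}(a, k \mid x)$ depends only on the item and its position, so the PBM is a special case of the IP model and \cref{prop:ip less biased} applies verbatim. The last inequality, $\EE_{\bS}[\vhat_\spbm(h)] \le V(h)$, follows from the same downside-bias-from-clipping argument used there: once I have a closed form for $\EE_{\bS}[\vhat_\spbm(h)]$, replacing the clip $\min\{\cdot, M\langle p_x, \pi(a, \cdot \mid x)\rangle\}$ by $\langle p_x, h(a, \cdot \mid x)\rangle$ can only increase the estimator and recovers $V(h)$ as written in \cref{sec:pbm estimator}. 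So the crux is the middle inequality $\EE_{\bS}[\vhat_\sip(h)] \le \EE_{\bS}[\vhat_\spbm(h)]$.

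To attack it, I would first compute both expectations under the PBM, assuming $\pihat = \pi$. Taking the expectation of a single logged term over $\bA \sim \pi(\cdot \mid \bx)$ and $\bw \sim D(\cdot \mid \bx)$ and substituting $\bar{w}(a, k \mid x) = \mu(a \mid x) p(k \mid x)$ gives
\begin{align*}
  \EE_{\bS}[\vhat_\sip(h)] = \EE_{\bx}\Big[ \sum_{a \in E} \mu(a \mid \bx) \sum_{k=1}^K p(k \mid \bx) \min\{h(a, k \mid \bx), M\pi(a, k \mid \bx)\} \Big]\,,
\end{align*}
exactly as in \cref{prop:ip less biased} but with the factorized click probability pulled apart. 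The key computation is the PBM estimator: the clipped ratio $\min\{\langle p_x, h(a, \cdot \mid x)\rangle / \langle p_x, \pi(a, \cdot \mid x)\rangle, M\}$ does not depend on the position $k$, so after summing over $k$ the factor $\sum_{k} p(k \mid x) \pi(a, k \mid x)$ collapses to $\langle p_x, \pi(a, \cdot \mid x)\rangle$ and cancels the denominator inside the clip, leaving
\begin{align*}
  \EE_{\bS}[\vhat_\spbm(h)] = \EE_{\bx}\Big[ \sum_{a \in E} \mu(a \mid \bx) \min\{\langle p_{\bx}, h(a, \cdot \mid \bx)\rangle, M\langle p_{\bx}, \pi(a, \cdot \mid \bx)\rangle\} \Big]\,.
\end{align*}

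Given these two forms, the middle inequality reduces, since $\mu(a \mid x) \ge 0$, to the pointwise claim that for every item $a$ and context $x$,
\begin{align*}
  \sum_{k=1}^K p(k \mid x) \min\{h(a, k \mid x), M\pi(a, k \mid x)\} \le \min\{\langle p_x, h(a, \cdot \mid x)\rangle, M\langle p_x, \pi(a, \cdot \mid x)\rangle\}\,.
\end{align*}
I would prove this exactly as the analogous step in \cref{prop:ip less biased}: bound the left-hand summand by each of its two arguments in turn and use $p(k \mid x) \ge 0$, obtaining $\sum_k p(k \mid x)\min\{\cdot\} \le \langle p_x, h(a, \cdot \mid x)\rangle$ and $\sum_k p(k \mid x)\min\{\cdot\} \le M\langle p_x, \pi(a, \cdot \mid x)\rangle$ separately, then take the minimum of the two upper bounds. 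I expect the main obstacle to be purely the bookkeeping of the second expectation: one must justify pulling the clip outside the position sum, which hinges on the PBM factorization making the importance weight constant in $k$, and then recognize $\sum_k p(k \mid x)\pi(a, k \mid x)$ as the inner product $\langle p_x, \pi(a, \cdot \mid x)\rangle$ that cancels the clip's denominator. Once that closed form is in hand, the inequality itself is the same two-sided $\min$ bound already used in \cref{prop:ip less biased}.
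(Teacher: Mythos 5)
Your proposal is correct and is exactly the argument the paper intends: the paper omits this proof, stating only that it ``can be proved similarly to \cref{prop:ip less biased}'', and your proof does precisely that---compute both expectations in closed form under the PBM with $\pihat = \pi$ (where the position-independence of the PBM clip collapses $\sum_k p(k \mid x)\pi(a, k \mid x)$ into $\langle p_x, \pi(a,\cdot \mid x)\rangle$ and cancels the denominator), then reduce the middle inequality to a pointwise two-sided $\min$ bound of the same form used in \cref{prop:ip less biased}. The outer inequalities are also handled correctly, the first by noting the PBM is a special case of the IP model and the last by the usual clipping downside-bias argument.
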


\noindent The above claims can be proved similarly to \cref{prop:ip less biased}. We omit their proofs due to space constraints.

\subsection{Policy Optimization}
\label{sec:policy optimization}

The estimators in \cref{sec:estimators} can be used to find better production policies. This section provides theoretical guarantees for finding such policies. We start with the list estimator in \cref{sec:list estimator}. Let
\begin{align*}
  \htilde_\subl = \argmax_{h \in \cH} \vhat_\subl(h)
\end{align*}
be the best policy according to the list estimator (\cref{sec:list estimator}). Then the value of $\htilde_\subl$, $V(\htilde_\subl)$, is bounded from below by the value of the optimal policy as follows.

\begin{theorem}
\label{thm:list} Let
\begin{align*}
  h_\subl^\ast =\argmax_{h \in \cH_\subl} V(h)
\end{align*}
be the best policy in the subset of policies $\cH_\subl$, which is defined in \cref{prop:more unbiased}. Then
\begin{align}
  & V(\htilde_\subl) \geq \label{eq:listopt} \\
  & V(h_\subl^\ast) - M \EE_{\bx} \left[F_\subl(\bx \mid \htilde_\subl) \right] -
  M \EE_{\bx} \left[ F_\subl(\bx \mid h_\subl^\ast) \right] - 2 K \sqrt{\frac{\ln(4/\delta)}{2|S|}} \nonumber
\end{align}
with probability of at least $1 - \delta$, where
\begin{align*}
  F_\subl(x \mid h) =
  \sum_A \I{\frac{h(A \mid x)}{\pi(A \mid x)} \le M} f(A, \bar{w}(\cdot \mid x)) \, \Delta(A \mid x)
\end{align*}
and $\Delta(A \mid x) = \abs{\pihat(A \mid x) - \pi(A \mid x)}$ is the error in our estimate of $\pi(A \mid x)$ in context $x$.
\end{theorem}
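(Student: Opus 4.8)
The plan is to sandwich $V(\htilde_\subl)$ between the realized estimator value and the true optimum $V(h_\subl^\ast)$, splitting the gap into a \emph{deterministic} bias piece that accounts for using $\pihat$ in place of $\pi$ (this is where $F_\subl$ enters) and a \emph{stochastic} deviation piece that accounts for $S$ being finite. Writing $G(h) = \EE_{\bS}[\vhat_\subl(h)]$ for the population mean of the estimator viewed as a function of a fixed policy $h$, I would establish the chain
\begin{align*}
  V(\htilde_\subl)
  & \ge G(\htilde_\subl) - M \EE_{\bx}[F_\subl(\bx \mid \htilde_\subl)] \\
  & \ge \vhat_\subl(\htilde_\subl) - M \EE_{\bx}[F_\subl(\bx \mid \htilde_\subl)] - \varepsilon \\
  & \ge \vhat_\subl(h_\subl^\ast) - M \EE_{\bx}[F_\subl(\bx \mid \htilde_\subl)] - \varepsilon \\
  & \ge G(h_\subl^\ast) - M \EE_{\bx}[F_\subl(\bx \mid \htilde_\subl)] - 2\varepsilon \\
  & \ge V(h_\subl^\ast) - M \EE_{\bx}[F_\subl(\bx \mid \htilde_\subl)] - M \EE_{\bx}[F_\subl(\bx \mid h_\subl^\ast)] - 2\varepsilon,
\end{align*}
where $\varepsilon = K \sqrt{\ln(4/\delta)/(2|S|)}$. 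Here the first and last inequalities are the bias bounds, the second and fourth are concentration, and the middle one is the optimality $\vhat_\subl(\htilde_\subl) \ge \vhat_\subl(h_\subl^\ast)$ of the empirical maximizer.

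For the bias bounds I would first compute, by taking the expectation over $\bx$, $\bA \sim \pi(\cdot\mid\bx)$, $\bw \sim D(\cdot\mid\bx)$ and using $\EE[f(A,\bw)] = f(A,\bar w(\cdot\mid\bx))$, that $G(h) = \EE_{\bx}[\sum_A f(A,\bar w(\cdot\mid\bx)) \min\{\pi(A\mid\bx)h(A\mid\bx)/\pihat(A\mid\bx),\, M\pi(A\mid\bx)\}]$, and compare it termwise against $V(h) = \EE_{\bx}[\sum_A f(A,\bar w(\cdot\mid\bx))\, h(A\mid\bx)]$. The crux is the pointwise inequality
\begin{align*}
  h(A\mid x) - \min\Big\{\tfrac{\pi(A\mid x)\, h(A\mid x)}{\pihat(A\mid x)},\ M\pi(A\mid x)\Big\}
  \ge -M \I{\tfrac{h(A\mid x)}{\pi(A\mid x)} \le M}\, \Delta(A\mid x),
\end{align*}
together with the matching upper bound $\le M\,\Delta(A\mid x)$ whenever $h(A\mid x)/\pi(A\mid x)\le M$. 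I would prove these by a short case analysis on whether the weight is clipped with respect to $\pi$ and with respect to $\pihat$, each case reducing to controlling $h/\pihat$ against $h/\pi$ through $\Delta = \abs{\pihat - \pi}$. Summing the lower bound against $f(A,\bar w)\ge 0$ and taking $\EE_{\bx}$ yields $V(h) \ge G(h) - M\EE_{\bx}[F_\subl(\bx\mid h)]$ for every $h$ (used at $\htilde_\subl$); the upper bound requires all weights to be unclipped, which holds for $h_\subl^\ast \in \cH_\subl$ and gives $G(h_\subl^\ast) \ge V(h_\subl^\ast) - M\EE_{\bx}[F_\subl(\bx\mid h_\subl^\ast)]$ — this is exactly where the restriction to $\cH_\subl$ is needed.

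For the concentration, the comparator $h_\subl^\ast$ is fixed, so $\vhat_\subl(h_\subl^\ast)$ is an average of $|S|$ i.i.d. terms with mean $G(h_\subl^\ast)$ and Hoeffding's inequality gives $\vhat_\subl(h_\subl^\ast) \ge G(h_\subl^\ast) - \varepsilon$ with probability $1-\delta/2$; a union bound over the two deviation events produces the $\ln(4/\delta)$ and the factor $2\varepsilon$. The step I expect to be the main obstacle is the second inequality, $G(\htilde_\subl) \ge \vhat_\subl(\htilde_\subl) - \varepsilon$: because $\htilde_\subl$ is itself a function of $S$, one cannot simply invoke Hoeffding at a fixed policy, so a fully rigorous argument needs either a uniform deviation bound $\sup_{h\in\cH}\abs{\vhat_\subl(h) - G(h)}$ over the policy class or the assumption that $\cH$ is finite. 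A second point to reconcile is the range driving $\varepsilon$: the stated factor $K$ comes from $f(A,w)\in[0,K]$, so I would check carefully whether the clipping constant $M$ must also enter the per-sample range, since each summand carries a clipped weight bounded by $M$, and track or absorb it accordingly.
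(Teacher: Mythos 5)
Your proposal is correct and follows essentially the same route as the paper: your pointwise bias inequalities (case analysis on whether the weight is clipped with respect to $\pi$ and to $\pihat$) are exactly the paper's \cref{lem:list expect bound} together with \cref{lem:helper 1,lem:helper 2}, and your five-step chain is precisely the paper's proof---Hoeffding at $\htilde_\subl$ and at $h_\subl^\ast$ with a union bound giving $\ln(4/\delta)$, combined with the empirical optimality $\vhat_\subl(\htilde_\subl) \ge \vhat_\subl(h_\subl^\ast)$ and the fact that $\EE_{\bx}[G_\subl(\bx \mid h_\subl^\ast)] = V(h_\subl^\ast)$ since $h_\subl^\ast \in \cH_\subl$---written out as a single display. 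The two caveats you flag are well spotted but are not gaps in your argument relative to the paper's: the paper itself applies Hoeffding directly to the data-dependent maximizer $\htilde_\subl$ without a uniform deviation bound over $\cH$, and it uses per-sample range $K$ even though each summand $f(A,w)\min\{h(A\mid x)/\pihat(A\mid x), M\}$ lies in $[0, KM]$, so both issues are glossed over in the published proof as well.
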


\noindent We prove our claim in \cref{sec:proofs}. Strehl \etal~\cite{strehl2010learning} proved a similar claim under the assumption that policies are deterministic given context. We generalize this result to stochastic policies.

Now we discuss the bound in \eqref{eq:listopt}. It contains three error terms, two expectations over $\bx$ and one $\sqrt{\log(1 / \delta)}$ term. The $\sqrt{\log(1 / \delta)}$ term is due to the randomness in generating the logged dataset. The two expectations are due to estimating the logging policy $\pi$ by $\pihat$. When the logging policy is known, $\pihat = \pi$, both terms vanish and our bound reduces to
\begin{align}
  V(\htilde_\subl) \ge V(h_\subl^\ast) - 2 K \sqrt{\frac{\ln(4/\delta)}{2|S|}}\,.
  \label{eq:list optimization lb}
\end{align}
The $\sqrt{\log(1 / \delta)}$ term vanishes as the size of the logged dataset $|S|$ increases.

The best policy under the list estimator, $\htilde_\subl$, is a solution to the following linear program (LP)
\begin{align*}
  \max_{h \in \cH, c} \quad & \sum_{(x, A, w) \in S} f(A, w) \, c(A \mid x) \\
  \text{s.t.} \quad & c(A \mid x) \, \pihat(A \mid x) \leq h(A \mid x)\,, && \forall A, x\,,\\
  & c(A \mid x) \leq M\,, && \forall A, x\,,\\
  & c(A \mid x) \geq 0\,, && \forall A, x\,,
\end{align*}
where $c$ is an auxiliary variable of the same dimension as policy $h$. The reason is that the maximization of $\displaystyle \min \left\{\frac{h(A \mid x)}{\pihat(A \mid x)}, M\right\}$ over $h(A \mid x)$ can be equivalently viewed as maximizing $c(A \mid x)$ subject to linear constraints $c(A \mid x) \, \pihat(A \mid x) \leq h(A \mid x)$ and $c(A \mid x) \leq M$. Note that although the number of lists $A$ is exponential in $K$, the above LP has $O(|S|)$ variables.

Similar guarantees can be obtained for all three structured estimators in \cref{sec:estimators}. Due to space constraints, we only analyze the value of the best IP policy (\cref{sec:item-position estimator}).

\begin{theorem}
\label{thm:ip} Let
\begin{align*}
  h_\sip^\ast = \argmax_{h \in \cH_\sip} V(h)\,, \quad \htilde_\sip & = \argmax_{h \in \cH} \vhat_\sip(h)\,,
\end{align*}
where $\cH_\sip$ is defined in \cref{prop:more unbiased}. Then in the IP model (\cref{sec:item-position estimator}),
\begin{align}
 & V(\htilde_\sip) \geq \label{eq:ipopt} \\
 & V(h_\sip^\ast) - M \EE_{\bx} \left[ F_\sip(\bx \mid \htilde_\sip) \right] -
 M \EE_{\bx} \left[ F_\sip(\bx \mid h_\sip^\ast) \right] - 2 K \sqrt{\frac{\ln(4/\delta)}{2|S|}} \nonumber
\end{align}
with probability of at least $1 - \delta$, where
\begin{align*}
  F_\sip(x \mid h) =
  \sum_{a \in E} \sum_{k=1}^K \I{\frac{h(a, k \mid x)}{\pi(a, k \mid x)} \le M} \bar{w}(a, k \mid x) \, \Delta(a, k \mid x)
\end{align*}
and $\Delta(a, k \mid x) = \abs{\pihat(a, k \mid x) - \pi(a, k \mid x)}$ is the error in our estimate of $\pi(a, k \mid x)$ in context $x$.
\end{theorem}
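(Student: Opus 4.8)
The plan is to follow the proof of \cref{thm:list} for the list estimator, replacing list-level weights by their item--position marginals, and to chain three comparisons: the clipping-bias inequality of \cref{prop:ip less biased}, a logging-policy-error inequality, and a Hoeffding concentration. Write $U(h)=\EE_{\bx}[\sum_{a\in E}\sum_{k=1}^K \bar{w}(a,k\mid\bx)\min\{h(a,k\mid\bx),M\pi(a,k\mid\bx)\}]$ for the population value of the IP estimator computed with the \emph{true} logging policy (the quantity appearing in the proof of \cref{prop:ip less biased}), and $\bar V_\sip(h)=\EE_{\bS}[\vhat_\sip(h)]$ for its mean when the weights use the estimate $\pihat$. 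Marginalizing the clicks over $\bA\sim\pi$ under the IP assumption yields the closed form $\bar V_\sip(h)=\EE_{\bx}[\sum_{a,k}\pi(a,k\mid\bx)\,\bar{w}(a,k\mid\bx)\min\{h(a,k\mid\bx)/\pihat(a,k\mid\bx),M\}]$, identical to the one in \cref{prop:ip less biased} but with $\pihat$ in the weight. From \cref{prop:ip less biased} I extract the two endpoints $V(\htilde_\sip)\ge U(\htilde_\sip)$ (the clip only lowers the value) and $U(h_\sip^\ast)=V(h_\sip^\ast)$ (the clip is inactive on $\cH_\sip$, defined in \cref{prop:more unbiased}).

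The core step bounds the error from $\pihat\neq\pi$ coordinatewise, i.e.\ it compares $\pi(a,k\mid x)\min\{h/\pi,M\}$ against $\pi(a,k\mid x)\min\{h/\pihat,M\}$ for each $(a,k,x)$ (suppressing the arguments of $h,\pi,\pihat$). On the unclipped set $\{h(a,k\mid x)/\pi(a,k\mid x)\le M\}$ the difference has magnitude at most $M\,\Delta(a,k\mid x)$; one verifies this by distinguishing whether $h/\pihat\le M$ (difference $(h/\pihat)|\pihat-\pi|\le M\Delta$) or $h/\pihat>M$ (difference $M\pi-h$, with $M\pihat<h\le M\pi$ giving $M\pi-h<M\Delta$). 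On the clipped set the difference is nonnegative, so it can only raise $U$ relative to $\bar V_\sip$. Weighting by $\bar{w}(a,k\mid x)$ and taking $\EE_{\bx}$ gives the one-sided bound $U(h)\ge\bar V_\sip(h)-M\EE_{\bx}[F_\sip(\bx\mid h)]$ for \emph{every} $h$, and — because $h_\sip^\ast\in\cH_\sip$ never reaches the clipped set — the matching two-sided bound $\bar V_\sip(h_\sip^\ast)\ge U(h_\sip^\ast)-M\EE_{\bx}[F_\sip(\bx\mid h_\sip^\ast)]$.

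Finally I would add concentration and optimality. Applying Hoeffding's inequality to $\vhat_\sip(h)$ — an average of $|S|$ i.i.d.\ per-sample terms $\sum_k w(a_k,k)\min\{\cdot,M\}$ whose observed reward is at most the number of positions $K$ — at confidence $\delta/2$ for each policy and union bounding yields, with probability $\ge 1-\delta$, both $\bar V_\sip(\htilde_\sip)\ge\vhat_\sip(\htilde_\sip)-K\sqrt{\ln(4/\delta)/(2|S|)}$ and $\vhat_\sip(h_\sip^\ast)\ge\bar V_\sip(h_\sip^\ast)-K\sqrt{\ln(4/\delta)/(2|S|)}$. Combining these with the optimality $\vhat_\sip(\htilde_\sip)\ge\vhat_\sip(h_\sip^\ast)$ and the inequalities above produces the claimed chain
\begin{align*}
V(\htilde_\sip)
&\ge U(\htilde_\sip) \ge \bar V_\sip(\htilde_\sip)-M\EE_{\bx}[F_\sip(\bx\mid\htilde_\sip)] \\
&\ge \vhat_\sip(\htilde_\sip)-M\EE_{\bx}[F_\sip(\bx\mid\htilde_\sip)]-K\sqrt{\tfrac{\ln(4/\delta)}{2|S|}} \\
&\ge \vhat_\sip(h_\sip^\ast)-M\EE_{\bx}[F_\sip(\bx\mid\htilde_\sip)]-K\sqrt{\tfrac{\ln(4/\delta)}{2|S|}} \\
&\ge \bar V_\sip(h_\sip^\ast)-M\EE_{\bx}[F_\sip(\bx\mid\htilde_\sip)]-2K\sqrt{\tfrac{\ln(4/\delta)}{2|S|}} \\
&\ge V(h_\sip^\ast)-M\EE_{\bx}[F_\sip(\bx\mid\htilde_\sip)]-M\EE_{\bx}[F_\sip(\bx\mid h_\sip^\ast)]-2K\sqrt{\tfrac{\ln(4/\delta)}{2|S|}}.
\end{align*}

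I expect the main obstacle to be the coordinatewise $\pihat$-error bound together with its asymmetry: the clipped set must be shown to contribute with a favorable sign for the data-dependent maximizer $\htilde_\sip$, for which only the one-sided inequality is available, whereas the exact cancellation $U(h_\sip^\ast)=V(h_\sip^\ast)$ and the two-sided error bound are legitimate only because $h_\sip^\ast$ lies in $\cH_\sip$. A secondary delicate point is the concentration for $\htilde_\sip$: since it maximizes $\vhat_\sip$, applying Hoeffding to it as if fixed needs the same justification as in Strehl \etal~\cite{strehl2010learning}, and one must check that the per-sample range contributes the stated factor $K$ rather than $KM$.
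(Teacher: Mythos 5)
Your proposal is correct and takes essentially the same route as the paper: your coordinatewise case analysis ($\pm M\Delta(a,k\mid x)$ on the unclipped set, a favorable sign on the clipped set, with the clipping inactive for policies in $\cH_\sip$) is exactly the content of \cref{lem:ip expect bound}, and your chain of Hoeffding bounds at $\htilde_\sip$ and $h_\sip^\ast$, a union bound, and the optimality inequality $\vhat_\sip(\htilde_\sip) \ge \vhat_\sip(h_\sip^\ast)$ mirrors the paper's proof of \cref{thm:list} (which \cref{thm:ip} follows verbatim). The two caveats you flag at the end --- applying Hoeffding to the data-dependent maximizer $\htilde_\sip$ as if it were fixed, and the per-sample range arguably contributing a factor $KM$ rather than $K$ --- are present and unaddressed in the paper's own proof as well, so they do not distinguish your argument from the paper's.
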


\noindent The theorem is proved in \cref{sec:proofs}. Similarly to the list estimator, the expectations over $\bx$ in \eqref{eq:ipopt} vanish when $\pihat = \pi$, and we get that
\begin{align}
  V(\htilde_\sip) \ge V(h_\sip^\ast) - 2 K \sqrt{\frac{\ln(4/\delta)}{2|S|}}\,.
  \label{eq:ip optimization lb}
\end{align}
By \cref{prop:more unbiased}, $\cH_\subl \subseteq \cH_\sip$. Therefore, the value of $h_\sip^\ast$ is at least as high as that of $h_\subl^\ast$, $V(h_\sip^\ast) \ge V(h_\subl^\ast)$. It follows from \eqref{eq:list optimization lb} and \eqref{eq:ip optimization lb} that the lower bound on the value of $\htilde_\sip$ is at least as high as that on $\htilde_\subl$.

The best policy under the IP estimator, $\htilde_\sip$, can be computed by solving a similar LP to that for $\htilde_\subl$. The number of variables in this LP is $O(|S|)$.

\subsection{Proofs for Section \ref{sec:policy optimization}}
\label{sec:proofs}

Before we prove \cref{thm:list}, we bound the expected value of the list estimator, $\vhat_\subl(h)$, for any policy $h$.

\begin{lemma}
\label{lem:list expect bound} Fix the estimated logging policy $\pihat$ and $M > 0$. Then
\begin{align*}
  \EE_{\bS}[\vhat_\subl(h)]
  & \le V(h) + M \EE_{\bx} \left[ F_\subl(\bx \mid h) \right]\,, \\
  \EE_{\bS}[\vhat_\subl(h)]
  & \ge \EE_{\bx} \left[ G_\subl(\bx \mid h) \right] - M \EE_{\bx} \left[ F_\subl(\bx \mid h) \right]\,,
\end{align*}
where $F_\subl(x \mid h)$ is defined in \cref{thm:list} and
\begin{align*}
  G_\subl(x \mid h) =
  \sum_A \I{\frac{h(A \mid x)}{\pi(A \mid x)} \le M} f(A, \bar{w}(\cdot \mid x)) \, h(A \mid x)\,.
\end{align*}
\end{lemma}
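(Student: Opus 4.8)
The plan is to evaluate $\EE_{\bS}[\vhat_\subl(h)]$ exactly and then reduce each inequality to a scalar comparison made separately for every list $A$ and context $x$. Since $S$ consists of i.i.d. samples $(\bx,\bA,\bw)$ with $\bA\sim\pi(\cdot\mid\bx)$ and $\bw\sim D(\cdot\mid\bx)$, the expectation of the empirical average equals that of a single summand; taking the expectation over $\bw$ first and using $\EE_{\bw}[f(A,\bw)]=f(A,\bar w(\cdot\mid x))$, I would obtain
\begin{align*}
  \EE_{\bS}[\vhat_\subl(h)]
  = \EE_{\bx}\!\left[\sum_A \pi(A\mid\bx)\, f(A, \bar w(\cdot\mid\bx)) \min\!\left\{\frac{h(A\mid\bx)}{\pihat(A\mid\bx)}, M\right\}\right]\,.
\end{align*}
Because $f(A,\bar w(\cdot\mid x))\ge 0$, both bounds follow from a pointwise inequality, so throughout I abbreviate $h=h(A\mid x)$, $\pi=\pi(A\mid x)$, $\pihat=\pihat(A\mid x)$, and $\Delta=\abs{\pi-\pihat}$.

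For the upper bound I would establish, for each fixed $A$ and $x$,
\begin{align*}
  \pi\min\!\left\{\frac{h}{\pihat}, M\right\} - h \;\le\; M\,\I{\frac{h}{\pi}\le M}\,\Delta\,.
\end{align*}
Multiplying by $f\ge 0$, summing over $A$, and taking $\EE_{\bx}$ turns the left side into $\EE_{\bS}[\vhat_\subl(h)]-V(h)$, using $V(h)=\EE_{\bx}[\sum_A h(A\mid\bx)f(A,\bar w(\cdot\mid\bx))]$, and the right side into $M\EE_{\bx}[F_\subl(\bx\mid h)]$. Symmetrically, for the lower bound I would establish
\begin{align*}
  \pi\min\!\left\{\frac{h}{\pihat}, M\right\} \;\ge\; \I{\frac{h}{\pi}\le M}\left(h - M\Delta\right)\,,
\end{align*}
whose weighted sum over $A$ gives $G_\subl(x\mid h)-M F_\subl(x\mid h)$, yielding the claimed lower bound after integrating over $\bx$.

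Each scalar inequality I would verify by a short case analysis. First split on the indicator: when $h/\pi>M$ the right sides vanish, and the inequalities hold by the trivial bounds $\pi\min\{h/\pihat,M\}\le M\pi<h$ and $\pi\min\{h/\pihat,M\}\ge 0$. When $h/\pi\le M$, I would split further on whether the clip against $\pihat$ is active, i.e.\ whether $h/\pihat\le M$, and within each branch on the sign of $\pi-\pihat$. The mechanism is that when the clip is inactive the bound $h/\pihat\le M$ lets me control $h\abs{\pi-\pihat}/\pihat\le M\Delta$, while when the clip is active the cap at $M$ together with $h\le M\pi$ keeps the gap within $M\Delta$; I would also observe that some sub-cases (for instance $h/\pihat>M$ with $\pihat>\pi$) are vacuous under $h\le M\pi$.

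The main obstacle is exactly this bookkeeping: the estimator clips against the \emph{estimated} logging policy $\pihat$, whereas the indicators defining $F_\subl$ and $G_\subl$ are phrased in terms of the \emph{true} policy $\pi$. The error term $M\Delta$ is what absorbs the mismatch between $\pihat$-clipping and $\pi$-clipping, and making the case analysis close tightly in both directions is the only delicate point; everything else is routine.
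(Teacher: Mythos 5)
Your proposal is correct and follows essentially the same route as the paper: the same exact evaluation of $\EE_{\bS}[\vhat_\subl(h)]$ as $\EE_{\bx}\bigl[\sum_A \pi(A\mid\bx) f(A,\bar w(\cdot\mid\bx)) \min\{h(A\mid\bx)/\pihat(A\mid\bx), M\}\bigr]$, followed by the same pointwise case split on $h(A\mid x)/\pi(A\mid x) \le M$ versus $> M$, with $M\Delta(A\mid x)$ absorbing the $\pi$-versus-$\pihat$ mismatch in the first case and the trivial bounds $0 \le \pi\min\{h/\pihat,M\} \le h$ in the second. The paper packages these two cases as standalone helper lemmas (stated without proof), whereas you additionally spell out their verification via the sub-case analysis on whether the clip is active and the sign of $\pi - \pihat$; that analysis closes correctly, including the observation that the sub-case $h/\pihat > M$ with $\pihat > \pi$ is vacuous when $h \le M\pi$.
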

\begin{proof}
Note that
\begin{align*}
  & \EE_{\bS}[\vhat_\subl(h)] \\
  & \quad = \EE_{\bx} \left[\EE_{\bA \sim \pi(\cdot \mid \bx)} \left[f(\bA, \bar{w}(\cdot \mid \bx))
  \min \left\{ \frac{h(\bA \mid \bx)}{\hat{\pi}(\bA \mid \bx)}, M\right\}\right]\right]\,, \\
  & \quad = \EE_{\bx} \left[\sum_{A} \left[f(A, \bar{w}(\cdot \mid \bx))
  \min \left\{ \frac{h(A \mid \bx)}{\hat{\pi}(A \mid \bx)}, M\right\} \pi(A \mid \bx)\right]\right]\,.
\end{align*}
The main claims are obtained by bounding
\begin{align*}
  \min \left\{ \frac{h(A \mid \bx)}{\hat{\pi}(A \mid \bx)}, M\right\} \pi(A \mid \bx)
\end{align*}
from above and below in two cases, when $h(A \mid x) / \pi(A \mid x) \le M$ (\cref{lem:helper 1}) and when $h(A \mid x) / \pi(A \mid x) > M$ (\cref{lem:helper 2}).
\end{proof}

\begin{lemma}
\label{lem:helper 1} Let $h(A \mid x) / \pi(A \mid x) \le M$. Then
\begin{align*}
  \min \left\{ \frac{h(A \mid x)}{\pihat(A \mid x)}, M\right\} \pi(A \mid x)
  & \le h(A \mid x) + M \Delta(A \mid x)\,, \\
  \min \left\{ \frac{h(A \mid x)}{\pihat(A \mid x)}, M\right\} \pi(A \mid x)
  & \ge h(A\mid x) - M \Delta(A \mid x)\,.
\end{align*}
\end{lemma}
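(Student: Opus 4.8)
The plan is a direct case analysis on the value of the perturbed ratio $h(A\mid x)/\pihat(A\mid x)$ relative to the clipping constant $M$, using the hypothesis $h(A\mid x)/\pi(A\mid x)\le M$ (equivalently $h(A\mid x)\le M\pi(A\mid x)$) only where it is actually needed. To lighten notation I would write $h=h(A\mid x)$, $\pi=\pi(A\mid x)$, $\pihat=\pihat(A\mid x)$, and $\Delta=\Delta(A\mid x)=|\pihat-\pi|$, and set $q:=\min\{h/\pihat,M\}\,\pi$, the quantity to be bounded.

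First, consider the case $h/\pihat\le M$, where the minimum is attained at $h/\pihat$ and hence $q=h\,\pi/\pihat$. Here I would use the exact decomposition
\[
  q = h\,\frac{\pi}{\pihat} = h + \frac{h}{\pihat}\,(\pi-\pihat).
\]
Since $h,\pihat\ge 0$ and we are in the case $h/\pihat\le M$, the multiplier obeys $0\le h/\pihat\le M$, so the error term satisfies $|\,\tfrac{h}{\pihat}(\pi-\pihat)\,|\le M\,|\pi-\pihat|=M\Delta$. This gives both $q\le h+M\Delta$ and $q\ge h-M\Delta$ at once, which are exactly the two claimed inequalities.

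Second, consider the case $h/\pihat>M$, where the minimum is $M$ and hence $q=M\pi$. The upper bound $M\pi\le h+M\Delta$ is the only part that requires work: it does not follow from the hypothesis alone but from combining it with the case condition. The case condition $h>M\pihat$ gives $M\pi-h<M\pi-M\pihat=M(\pi-\pihat)$; moreover the hypothesis $M\pi\ge h$ together with $h>M\pihat$ forces $\pi>\pihat$, so $\pi-\pihat=\Delta$ and thus $q=M\pi<h+M\Delta$. The lower bound $M\pi\ge h-M\Delta$ is then immediate, since the hypothesis $h\le M\pi$ already yields $h-M\pi\le 0\le M\Delta$, so $\Delta$ is not even needed on this side.

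The only genuinely delicate point, the main obstacle such as it is, is recognizing that the hypothesis $h/\pi\le M$ enters asymmetrically: in the clipped case it drives only the upper bound (through $M\pi-h\le M\Delta$ via the case condition), while the lower bound is trivial, and keeping straight which assumption controls which direction in each case is the entire content of the argument. These pointwise two-sided bounds on $q$ are precisely what \cref{lem:list expect bound} needs, since $\EE_{\bS}[\vhat_\subl]$ is assembled there by averaging $q$ against $f(A,\bar w(\cdot\mid x))\ge 0$ over $A$ and $\bx$, which separates cleanly into the $G_\subl$ and $F_\subl$ terms.
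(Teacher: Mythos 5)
Your proof is correct. The paper in fact states \cref{lem:helper 1} without any proof (it is a helper lemma absorbed silently into the proof of \cref{lem:list expect bound}), and your two-case analysis on whether the clipping is active---the exact decomposition $h\pi/\pihat = h + (h/\pihat)(\pi - \pihat)$ with $0 \le h/\pihat \le M$ in the unclipped case, and the case condition $h > M\pihat$ giving $M\pi - h \le M(\pi - \pihat) \le M\Delta$ in the clipped case---is precisely the standard argument the paper evidently intends, so there is nothing to fill in or correct.
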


\begin{lemma}
\label{lem:helper 2} Let $h(A \mid x) / \pi(A \mid x) > M$. Then
\begin{align*}
  0 \le
  \min \left\{ \frac{h(A \mid x)}{\pihat(A \mid x)}, M\right\} \pi(A \mid x) \le
  h(A \mid x)\,.
\end{align*}
\end{lemma}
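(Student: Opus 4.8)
The plan is to establish the two inequalities separately, each following from a one-line observation; no case analysis on which argument of the minimum is active turns out to be necessary. The only two ingredients I would use are (i) that all of $h(A \mid x)$, $\pi(A \mid x)$, $\pihat(A \mid x)$ are nonnegative and $M > 0$, so that the clipped importance weight $\min\{h(A \mid x)/\pihat(A \mid x), M\}$ is a nonnegative quantity, and (ii) the defining hypothesis of the lemma, which I would rewrite in the equivalent product form $M \pi(A \mid x) < h(A \mid x)$.

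For the lower bound, I would simply note that each factor in $\min\{h(A \mid x)/\pihat(A \mid x), M\}\, \pi(A \mid x)$ is nonnegative: the ratio $h(A \mid x)/\pihat(A \mid x) \ge 0$ together with $M > 0$ forces their minimum to be nonnegative, and $\pi(A \mid x) \ge 0$ as a probability, so the product is $\ge 0$.

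For the upper bound, the key step is to discard the first argument of the minimum and use only $\min\{h(A \mid x)/\pihat(A \mid x), M\} \le M$. Multiplying by the nonnegative factor $\pi(A \mid x)$ gives
\begin{align*}
  \min\left\{\frac{h(A \mid x)}{\pihat(A \mid x)}, M\right\} \pi(A \mid x) \le M \pi(A \mid x)\,.
\end{align*}
Then invoking the hypothesis in the form $M \pi(A \mid x) < h(A \mid x)$ chains to the desired bound $\le h(A \mid x)$. Notably, this is exactly where the regime assumption $h/\pi > M$ of this lemma (as opposed to the complementary regime handled in \cref{lem:helper 1}) enters, and it is what lets the bound avoid any dependence on the estimation error $\Delta(A \mid x)$.

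There is essentially no hard step here; the lemma is a mechanical bookkeeping fact isolated so that \cref{lem:list expect bound} can treat the clipped and unclipped regimes uniformly. The only point requiring a little care is the implicit nondegeneracy needed for the ratios to be well defined — the hypothesis $h(A \mid x)/\pi(A \mid x) > M$ presumes $\pi(A \mid x) > 0$, and the estimator itself presumes $\pihat(A \mid x) > 0$ wherever the weight is evaluated — but under those standing conventions both displayed inequalities follow immediately from (i) and (ii) above.
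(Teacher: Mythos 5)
Your proof is correct and is exactly the intended argument: the paper states \cref{lem:helper 2} without a written proof precisely because it reduces to the two one-line observations you give (nonnegativity of the clipped weight times $\pi(A \mid x)$ for the lower bound, and $\min\{h(A \mid x)/\pihat(A \mid x), M\}\,\pi(A \mid x) \le M\pi(A \mid x) < h(A \mid x)$ for the upper bound). Your remarks on where the regime hypothesis enters and why no $\Delta(A \mid x)$ term appears also match how the lemma is used in \cref{lem:list expect bound}.
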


\noindent When the logging policy is known, $\pihat = \pi$, we have
\begin{align*}
  \EE_{\bx}\left[ G_\subl(\bx \mid h) \right] \le
  \EE_{\bS}[\vhat_\subl(h)] \le
  V(h)\,.
\end{align*}
In expectation, the list estimator underestimates $V(h)$. This is consistent with the intuition that clipping of the estimator leads to a downside bias. Also, when $\pihat = \pi$, $\EE_{\bS}[\vhat_\subl(h)] = V(h)$ for all $h \in \cH_\subl$, which means that the list estimator is unbiased for any policy $h$ that is not affected by the clipping.

Now we are ready to prove \cref{thm:list}.

\begin{proof}
From Hoeffding's inequality and the upper bound in \cref{lem:list expect bound},
\begin{align*}
  \vhat_\subl(\htilde_\subl) \le
  V(\htilde_\subl) + M \EE_{\bx} \left[ F_\subl(\bx \mid \htilde_\subl) \right] + K \sqrt{\frac{\ln(4/\delta)}{2|S|}}
\end{align*}
with probability of at least $1 - \delta / 2$. Similarly, from Hoeffding's inequality, the lower bound in \cref{lem:list expect bound}, and $\EE_{\bx} \left[ G_\subl(\bx \mid h_\subl^\ast) \right] = V(h_\subl^\ast)$ because $h_\subl^\ast \in \cH_\subl$,
\begin{align*}
  \vhat_\subl(h_\subl^\ast) \ge
  V(h_\subl^\ast) - M \EE_{\bx} \left[ F_\subl(\bx \mid h_\subl^\ast) \right] - K \sqrt{\frac{\ln(4/\delta)}{2|S|}}
\end{align*}
with probability of at least $1 - \delta / 2$. The final result follows from the observation that $\vhat_\subl(\htilde_\subl) \ge \vhat_\subl(h_\subl^\ast)$.
\end{proof}

\noindent Similarly to the above proof, the key step in the proof of \cref{thm:ip} are upper and lower bounds on the expected value of the IP estimator, which are presented below.

\begin{lemma}
\label{lem:ip expect bound} Fix the estimated logging policy $\pihat$ and $M > 0$. Then
\begin{align*}
  \EE_{\bS}[\vhat_\sip(h)]
  & \le V(h) + M \EE_{\bx} \left[ F_\sip(\bx \mid h) \right]\,, \\
  \EE_{\bS}[\vhat_\sip(h)]
  & \ge \EE_{\bx} \left[G_\sip(\bx \mid h)\right] - M \EE_{\bx}\left[F_\sip(\bx \mid h) \right]\,,
\end{align*}
where $F_\sip(x \mid h)$ is defined in \cref{thm:ip} and
\begin{align*}
  G_\sip(x \mid h) =
  \sum_{a\in E} \sum_{k=1}^K \I{\frac{h(a, k \mid x)}{\pi(a, k \mid x)} \le M} \bar{w}(a, k \mid x) \, h(a, k \mid x)\,.
\end{align*}
\end{lemma}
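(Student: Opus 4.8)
The plan is to mirror the proof of \cref{lem:list expect bound}, but to carry out every step at the level of item--position pairs $(a,k)$ rather than whole lists $A$. First I would compute the expectation of $\vhat_\sip(h)$ exactly. Writing the dataset expectation as a draw of context $\bx$, a list $\bA \sim \pi(\cdot \mid \bx)$, and click realizations $\bw \sim D(\cdot \mid \bx)$, and using that $\bw$ is conditionally independent of $\bA$ given $\bx$ (so $\EE_{\bw}[\bw(\ba_k, k)] = \bar{w}(\ba_k, k \mid \bx)$), one obtains
\begin{align*}
  \EE_{\bS}[\vhat_\sip(h)]
  = \EE_{\bx}\left[\EE_{\bA \sim \pi(\cdot \mid \bx)}\left[\sum_{k=1}^K \bar{w}(\ba_k, k \mid \bx) \min\left\{\frac{h(\ba_k, k \mid \bx)}{\pihat(\ba_k, k \mid \bx)}, M\right\}\right]\right].
\end{align*}
I would then apply the identities \eqref{eq:ip first}--\eqref{eq:ip routine} in reverse --- exactly as in the derivation of \eqref{eq:ip final}, but now carrying the clipped importance weight along as an extra factor --- inserting $\bOne\{a_k = a\}$, swapping the order of summation, and collapsing $\sum_A \pi(A \mid \bx)\,\bOne\{a_k = a\}$ into the marginal display probability $\pi(a, k \mid \bx)$. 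This yields the item--position analogue of the expression in \cref{lem:list expect bound},
\begin{align*}
  \EE_{\bS}[\vhat_\sip(h)]
  = \EE_{\bx}\left[\sum_{a \in E}\sum_{k=1}^K \bar{w}(a, k \mid \bx) \min\left\{\frac{h(a, k \mid \bx)}{\pihat(a, k \mid \bx)}, M\right\} \pi(a, k \mid \bx)\right].
\end{align*}

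Next I would bound the inner factor $\min\{h(a, k \mid x)/\pihat(a, k \mid x), M\}\,\pi(a, k \mid x)$ from above and below in the two cases $h(a, k \mid x)/\pi(a, k \mid x) \le M$ and $h(a, k \mid x)/\pi(a, k \mid x) > M$. These are the item--position versions of \cref{lem:helper 1,lem:helper 2}, and their proofs are verbatim the same with $A$ replaced by $(a,k)$: in the unclipped case the factor lies within $M\Delta(a, k \mid x)$ of $h(a, k \mid x)$, and in the clipped case it lies in $[0,\, h(a, k \mid x)]$. Combining the two cases, for every $(a,k)$ the factor is at most $h(a, k \mid x) + \I{h(a, k \mid x)/\pi(a, k \mid x) \le M}\,M\Delta(a, k \mid x)$ and at least $\I{h(a, k \mid x)/\pi(a, k \mid x) \le M}\,(h(a, k \mid x) - M\Delta(a, k \mid x))$.

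Finally I would multiply these two bounds by $\bar{w}(a, k \mid x) \ge 0$, sum over $a$ and $k$, and take $\EE_{\bx}$. The leading term $\EE_{\bx}[\sum_{a,k}\bar{w}(a, k \mid \bx)\,h(a, k \mid \bx)]$ equals $V(h)$ by the identities \eqref{eq:ip first}--\eqref{eq:ip routine}, while the correction terms are exactly $M\,\EE_{\bx}[F_\sip(\bx \mid h)]$ and, in the lower bound, $\EE_{\bx}[G_\sip(\bx \mid h)]$, by the definitions in \cref{thm:ip} and in the lemma statement. This delivers the two claimed inequalities.

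The step I expect to require the most care is the \emph{first} one: re-running the IP importance-sampling identities in reverse while the clipped weight $\min\{h/\pihat, M\}$ rides along. One must keep track of the fact that the clipping uses the estimated policy $\pihat$ in the denominator, whereas the marginal that is collapsed out of $\sum_A \pi(A \mid \bx)\,\bOne\{a_k = a\}$ is the \emph{true} $\pi(a, k \mid \bx)$. Keeping these two quantities distinct is precisely what forces the later case split on $h/\pi$ versus $M$ and makes $\Delta(a, k \mid x) = \abs{\pihat(a, k \mid x) - \pi(a, k \mid x)}$ appear. Once the exact expression for $\EE_{\bS}[\vhat_\sip(h)]$ is in hand, the remaining bounding is routine and identical in structure to the list-level argument.
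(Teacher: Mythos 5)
Your proposal is correct and follows essentially the same route as the paper: the paper's proof likewise reduces to the argument of \cref{lem:list expect bound} carried out over item--position pairs, expressing $\EE_{\bS}[\vhat_\sip(h)]$ as $\EE_{\bx}\bigl[\sum_{a,k}\bar{w}(a,k\mid\bx)\min\{h(a,k\mid\bx)/\pihat(a,k\mid\bx),M\}\,\pi(a,k\mid\bx)\bigr]$ and then invoking exactly the two case inequalities you state (within $M\Delta(a,k\mid x)$ of $h(a,k\mid x)$ when unclipped, in $[0,h(a,k\mid x)]$ when clipped). Your write-up is simply a more explicit version of the paper's terse proof, with the same case split and the same assembly of $V(h)$, $F_\sip$, and $G_\sip$.
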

\begin{proof}
The proof follows the same line of reasoning as that in \cref{lem:list expect bound}, with the exception that we use inequalities
\begin{align*}
  \abs{ \min \left\{ \frac{ h(a, k \mid x) }{ \pihat(a, k \mid x)}, M\right\} \pi(a, k \mid x) - h(a, k \mid x)} \le
  M \Delta(a, k \mid x)
\end{align*}
when $h(a, k \mid x) / \pi(a, k \mid x) \le M$, and
\begin{align*}
  0 \le
  \min\left\{\frac{h(a, k \mid x)}{\pihat(a, k \mid x)}, M\right\} \pi(a, k \mid x) \le
  h(a, k \mid x)
\end{align*}
when $h(a, k \mid x) / \pi(a, k \mid x) > M$.
\end{proof}

\noindent Again, when the logging policy is known, $\pihat = \pi$, we have that
\begin{align*}
  \EE_{\bx}[G_\sip(\bx \mid h)] \le
  \EE_{\bS}[\vhat_\sip(h)] \le
  V(h)\,.
\end{align*}


\section{Weighted Click Estimators}
\label{sec:weighted click estimators}

\begin{table}[t]
    \begin{tabular}{l} \hline
      \multicolumn{1}{c}{List estimator $\vhat_\subl(h)$} \\
      $\displaystyle \frac{1}{|S|} \sum_{(x, A, w) \in S} \sum_{k = 1}^K \theta_k w(a_k, k)
      \min \left\{\frac{h(A \mid x)}{\pihat(A \mid x)}, M\right\}$ \\ \hline
      \multicolumn{1}{c}{IP estimator $\vhat_\sip(h)$} \\
      $\displaystyle \frac{1}{|S|} \sum_{(x, A, w) \in S} \sum_{k = 1}^K \theta_k w(a_k, k)
      \min \left\{\frac{h(a_k, k \mid x)}{\pihat(a_k, k \mid x)}, M\right\}$ \\ \hline
      \multicolumn{1}{c}{RCTR estimator $\vhat_\sr(h)$} \\
      $\displaystyle \frac{1}{|S|} \sum_{(x, A, w) \in S} \sum_{k = 1}^K \theta_k w(a_k, k)$ \\ \hline
      \multicolumn{1}{c}{PBM estimator $\vhat_\spbm(h)$} \\
      $\displaystyle \frac{1}{|S|} \sum_{(x, A, w) \in S} \sum_{k = 1}^K \theta_k w(a_k, k)
      \min \left\{\frac{\langle\theta \circ p_x, h(a_k, \cdot \mid x)\rangle}
      {\langle\theta \circ p_x, \pihat(a_k, \cdot \mid x)\rangle}, M\right\}$ \\ \hline
      \multicolumn{1}{c}{Item estimator $\vhat_\si(h)$} \\
      $\displaystyle \frac{1}{|S|} \sum_{(x, A, w) \in S} \sum_{k = 1}^K \theta_k w(a_k, k)
      \min \left\{\frac{\langle\theta, h(a_k, \cdot \mid x)\rangle}
      {\langle\theta, \pihat(a_k, \cdot \mid x)\rangle}, M\right\}$ \\ \hline
  \end{tabular}
  \vspace{0.1in}
  \caption{Summary of our estimators. We denote by $u \circ v$ the entry-wise product of vectors $u$ and $v$.}
  \label{tab:estimators}
\end{table}

Suppose the reward of list $A$ is a weighted sum of clicks,
\begin{align*}
  f(A, w) = \sum_{k = 1}^K \theta_k w(a_k, k)
\end{align*}
for some fixed $\theta = (\theta_1, \ldots, \theta_K) \in \RR_{+}^K$. In \cref{sec:estimators}, we study the special case of $\theta = \bone_K$. Another important case is when $f(A, w)$ is the \emph{discounted cumulative gain (DCG)} of $A$, $\displaystyle \theta = \left(\frac{1}{\log_2(1 + k)}\right)_{k = 1}^K$. In this section, we generalize our estimators from \cref{sec:estimators} to any reward function of the above form.

Our generalized estimators are presented in \cref{tab:estimators}. These estimators are derived as follows. The IP and RCTR estimators are derived as in \cref{sec:item-position estimator,sec:position estimator}, respectively, with a minor difference that $\theta_k$ is carried with $\bar{w}(a, k \mid \bx)$ in all steps of the derivation. The PBM estimator is derived as in \cref{sec:pbm estimator}, with the difference that $\sum_{k = 1}^K \theta_k \, p(k \mid \bx) \, h(a, k \mid \bx)$ is rewritten as
\begin{align*}
  \langle\theta \circ p_{\bx}, \pi(a, \cdot \mid \bx)\rangle
  \frac{\langle\theta \circ p_{\bx}, h(a, \cdot \mid \bx)\rangle}
  {\langle\theta \circ p_{\bx}, \pi(a, \cdot \mid \bx)\rangle}\,,
\end{align*}
where $u \circ v$ is the entry-wise product of vectors $u$ and $v$. The item estimator is derived from the PBM estimator, as in \cref{sec:item estimator}.


\section{Experiments}
\label{sec:experiments}

We experiment with the \emph{Yandex} dataset \cite{yandex}, which is a web search dataset with more than $167$ million web search queries. The dataset contains a training set, which is recorded over $27$ days, and a test set, which is recorded over $3$ days. Each \emph{record} in the Yandex dataset contains a query ID, the day when the query occurs, $10$ displayed items as a response to the query, and the corresponding click indicators of each displayed item. 

We observe in a majority of queries that the average number of clicks in the test set is significantly lower than in the training set, sometimes by an order of magnitude. This is due to the preprocessing of the Yandex dataset for the \emph{Personalized Web Search Challenge} \cite{yandex}. Due to this downside bias, all structured estimators in \cref{sec:estimators} perform extremely well at $M < 1$, when the training set is used as the logged dataset $S$ in \eqref{eq:logged dataset} and the test set is used to estimate $V(h)$. To avoid this systematic bias, which does not show the statistical efficiency of our estimators, we discard the test set and adopt a different evaluation methodology.

\subsection{Experimental Setup}
\label{sec:experimental setup}

We compare five estimators from \cref{sec:estimators}: list $\vhat_\subl$, RCTR $\vhat_\sr$, item $\vhat_\si$, IP $\vhat_\sip$, and PBM $\vhat_\spbm$. They are implemented as described in \cref{sec:estimators}. The examination probability of position $k$ in the PBM is set to $1 / k$. We leave its optimization for future work.

For each estimator, query $q$, and day $d \in [27]$, we put all records in day $d$ into the \emph{evaluation set} and all records in days $[27] \setminus \{d\}$ into the \emph{production set}. The production set is the logged dataset $S$ in \eqref{eq:logged dataset}. We estimate the \emph{production policy} by the frequencies of lists in the production set, and denote it by $\hat{\pi}_{q, d}$. We estimate the \emph{evaluated policy} by the frequencies of lists in the evaluation set, and denote it by $h_{q, d}$. Let $V_{q, d}$ be the value of $h_{q, d}$ on day $d$ in query $q$, which is estimated by its empirical average in the evaluation set; and $\hat{V}_{q, d}$ be its estimate from $\hat{\pi}_{q, d}$. We measure the error of the estimator in query $q$ by its average error over all evaluation sets, one for each day. In particular, we use the \emph{root-mean-square error (RMSE)} in
\begin{align*}
  \textstyle
  \sqrt{\frac{1}{27} \sum_{d = 1}^{27} (\hat{V}_{q, d} - V_{q, d})^2}\,.
\end{align*}
The error in multiple queries is defined as
\begin{align*}
  \textstyle
  \sqrt{\frac{1}{27 \abs{Q}} \sum_{q \in Q} \sum_{d = 1}^{27} (\hat{V}_{q, d} - V_{q, d})^2}\,,
\end{align*}
where $Q$ is the set of the evaluated queries.

All estimators are evaluated on three prediction problems: the expected number of clicks at the first $K = 2$ positions, where our dataset is restricted to those positions; the expected number of clicks at the first $K = 3$ positions, where our dataset is restricted to those positions; and the DCG, where the estimators are weighted as described in \cref{sec:weighted click estimators}. Our estimators yield only minor improvements in predicting the expected number of clicks at all positions. We discuss this issue in detail in \cref{sec:yandex bias}.

The queries in the Yandex dataset do not come with context. Therefore, we assume that the context is the same in all records.

\subsection{Illustrative Query}
\label{sec:illustrative query}

\begin{figure*}
  \centering
  \includegraphics[width = 0.32\textwidth]{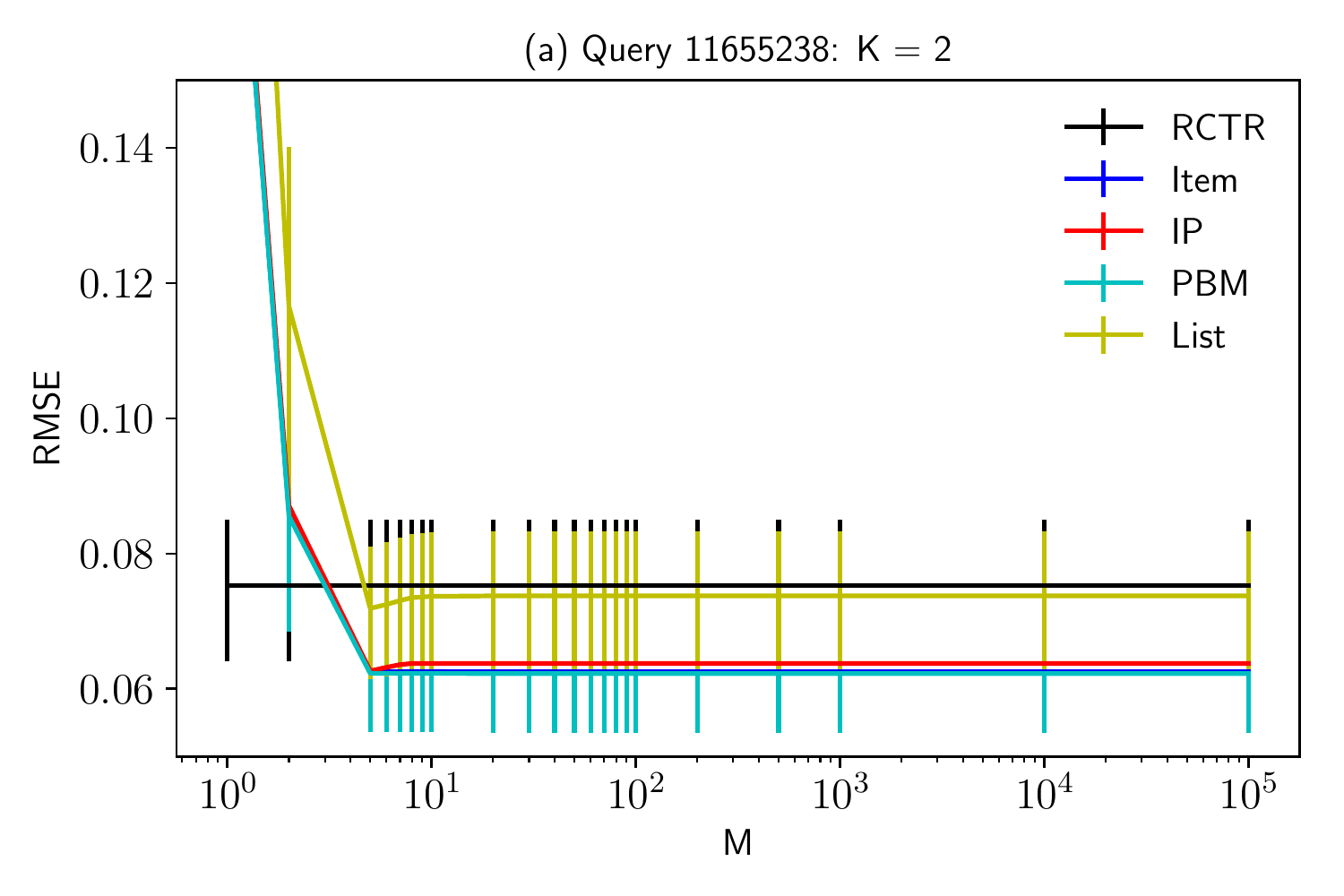}
  \includegraphics[width = 0.32\textwidth]{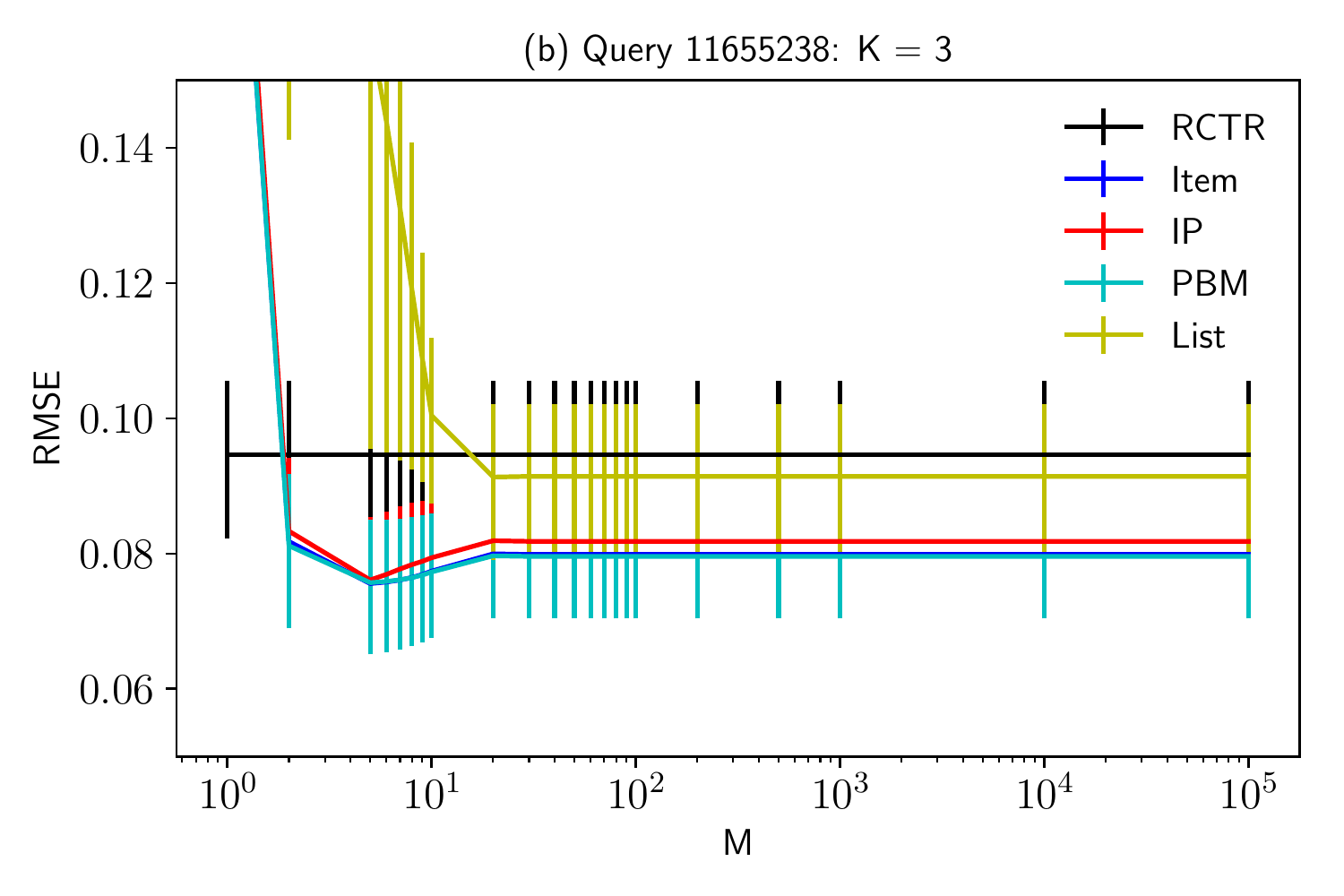}
  \includegraphics[width = 0.32\textwidth]{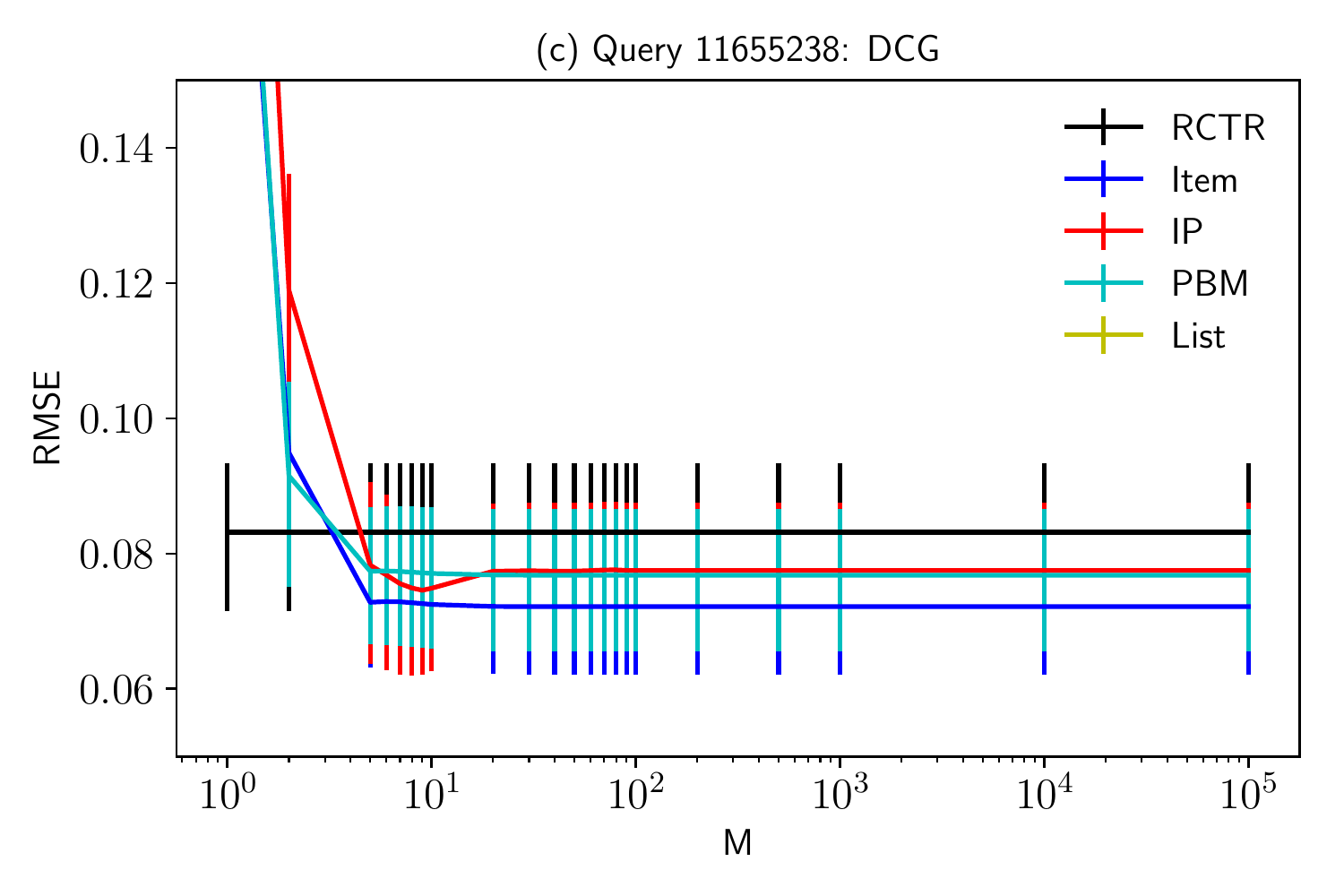}
  \caption{Prediction errors on query $11655238$ as a function of clipping parameter $M$.}
  \label{fig:query374}
\end{figure*}

This experiment illustrates our setup and its variations. It is conducted on query $11655238$, which appears in our dataset $3\,553$ times. The responses are $63$ distinct lists and $53$ distinct items.

The prediction errors at the first $K = 2$ positions are reported in \cref{fig:query374}a. The error of the RCTR estimator is $0.075$. For any $M \geq 40$, the errors of our three structured estimators are at least $16.07\%$ lower than that of the RCTR estimator and $13.55\%$ lower than that of the list estimator. The error of the list estimator at $M = 5$ is $4.50\%$ lower than that at $M = \infty$. The error of the IP estimator at $M = 5$ is $1.74\%$ lower than that at $M = \infty$. This shows the benefits of clipping in importance sampling estimators.

The prediction errors at the first $K = 3$ positions are reported in \cref{fig:query374}b. For any $M \geq 30$, the errors of our three structured estimators are at least $13.56\%$ lower than that of the RCTR estimator and $10.51\%$ lower than that of the list estimator. These gains further increase to $19.59\%$ and $51.73\%$ at $M = 5$.

The DCG prediction errors are reported in \cref{fig:query374}c. The errors of the list estimator never drop below $0.245$, and therefore are not visible in the figure. For any $M \geq 100$, the errors of our structured estimators are at least $6.82\%$ lower than that of the RCTR estimator and $68.34\%$ lower than that of the list estimator. These gains further increase to $10.32\%$ and $72.86\%$ at $M = 9$.

Finally, we observe in all figures that the item estimator consistently outperforms the IP estimator. We believe that this is because of the bias-variance tradeoff. In particular, the item estimator has a higher bias than the IP estimator because it is a special case of that estimator (\cref{sec:pbm estimator,sec:item estimator}). Because of that, it depends on fewer estimated importance weights, and can perform better in the regime of less training data, as in this query.

\subsection{Hundred Most Frequent Queries}
\label{sec:hundred most frequent queries}

\begin{figure*}
  \centering
  \includegraphics[width = 0.32\textwidth]{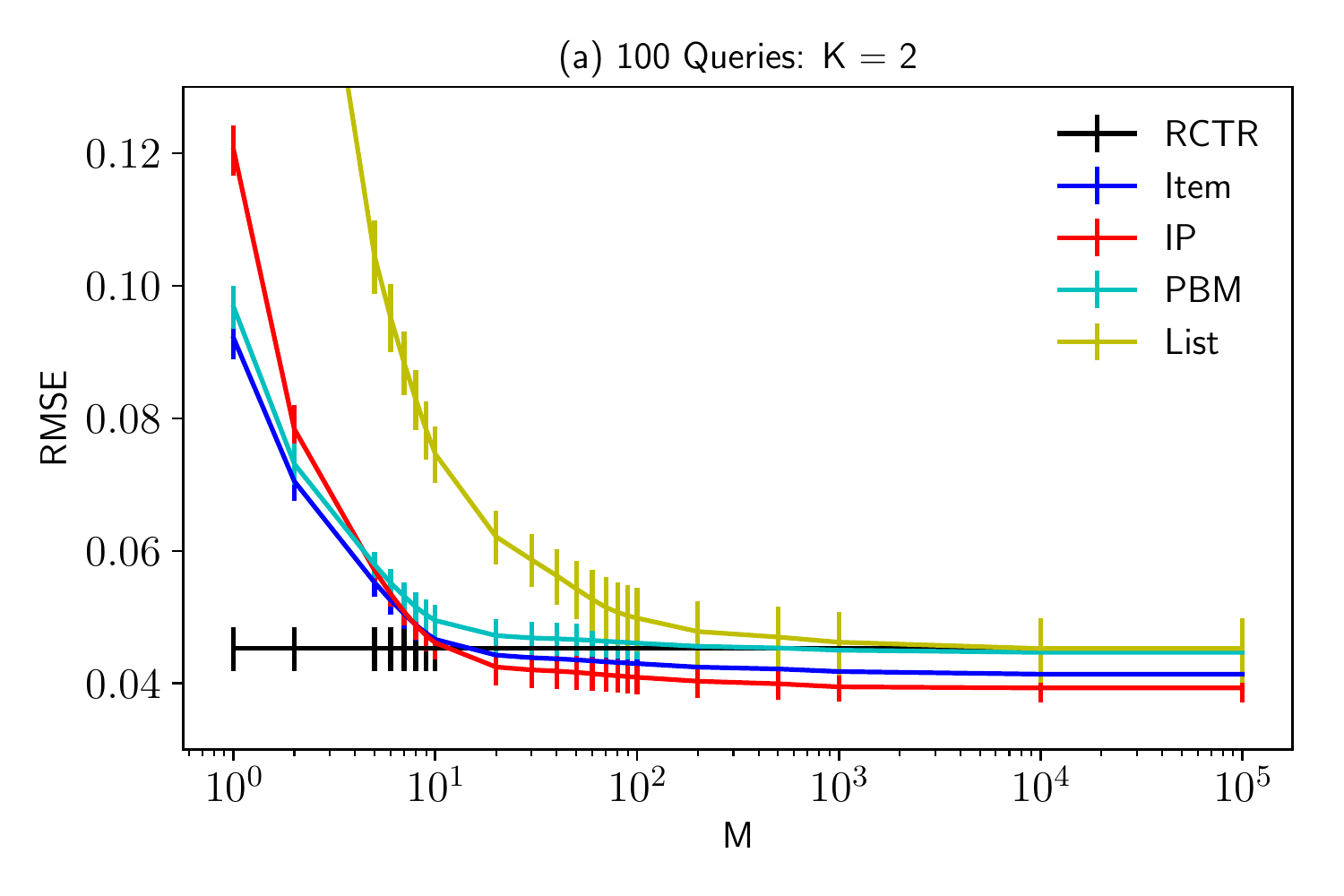}
  \includegraphics[width = 0.32\textwidth]{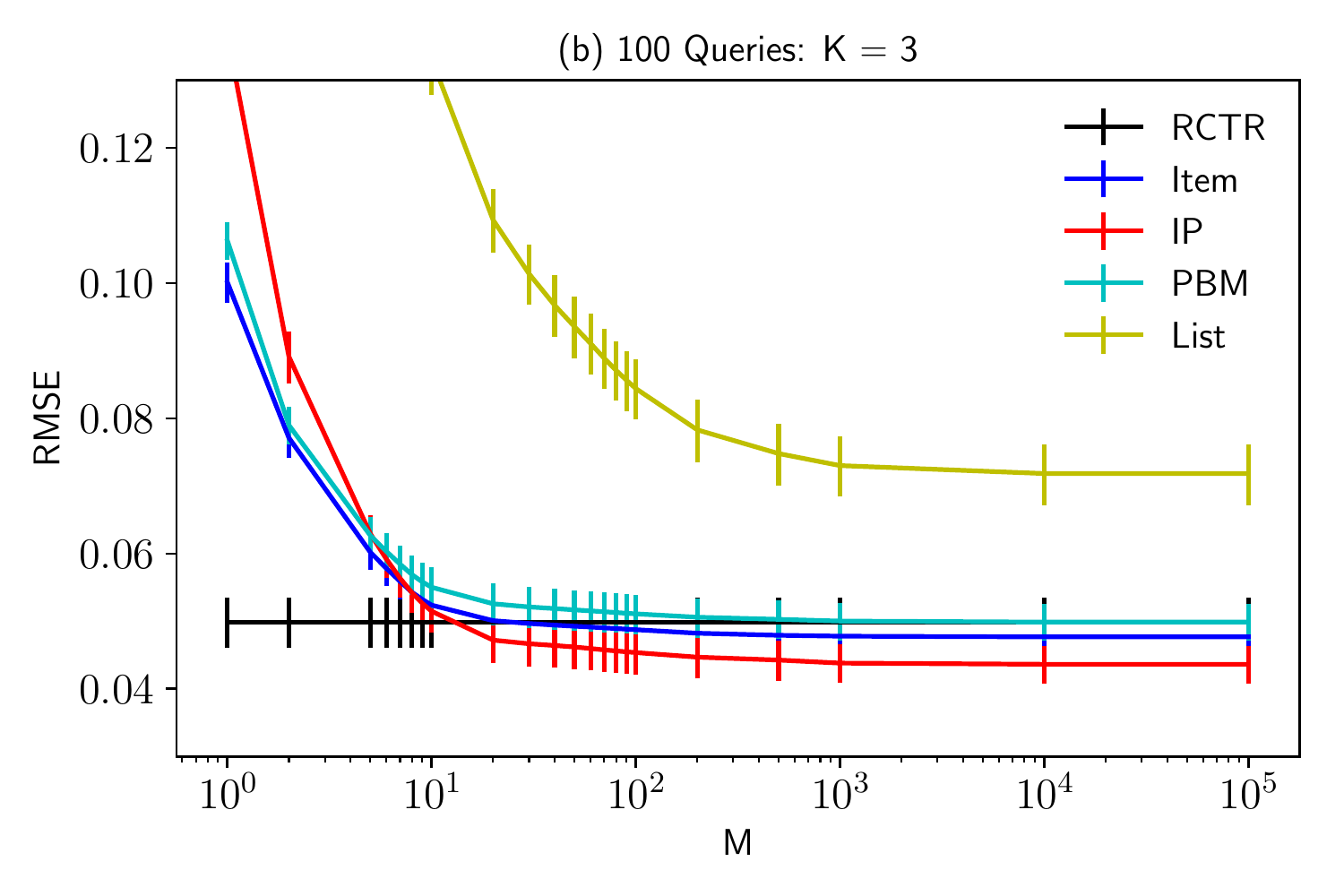}
  \includegraphics[width = 0.32\textwidth]{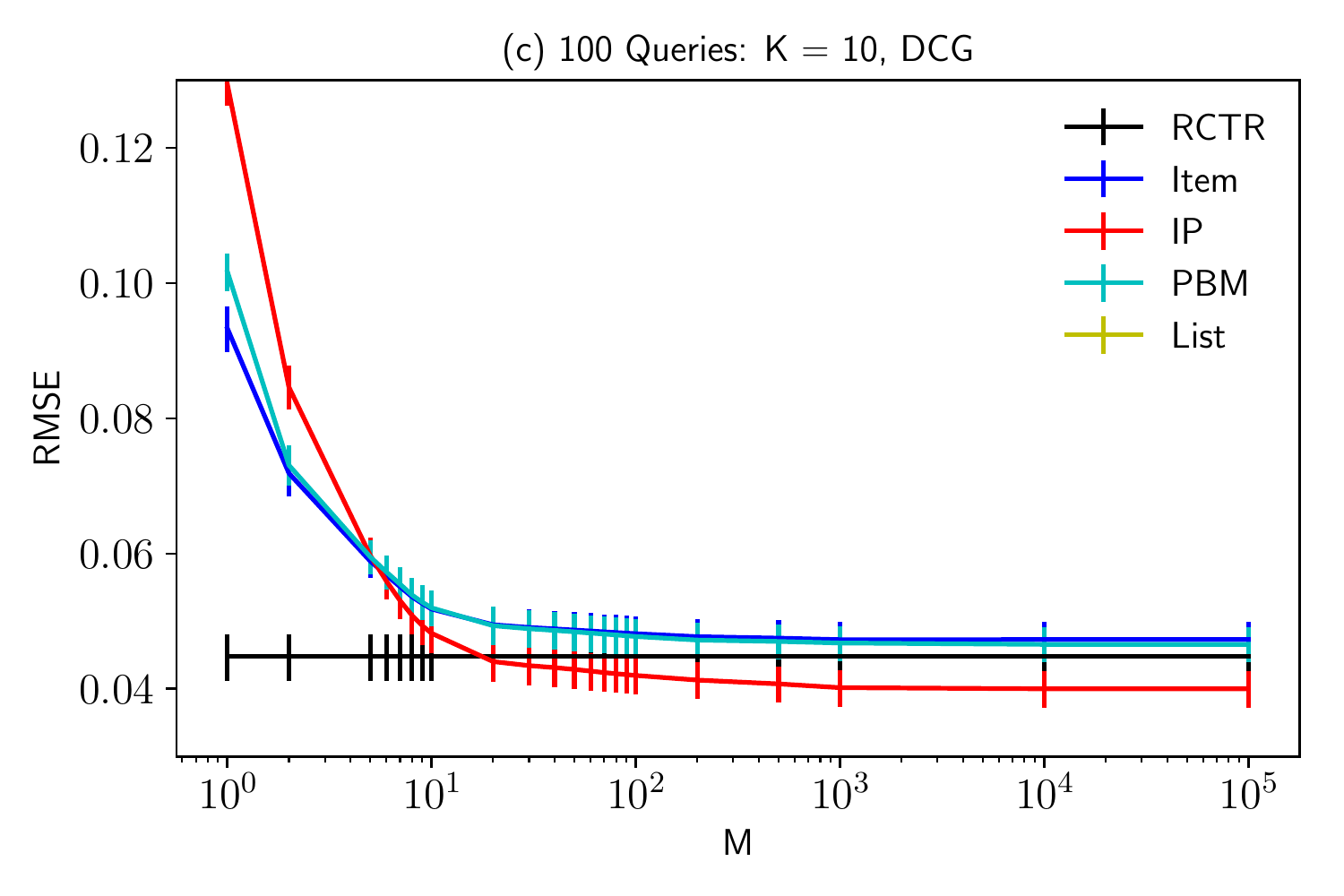}
  \caption{Prediction errors on $100$ most frequent queries as a function of clipping parameter $M$.}
  \label{fig:top100prob}
\end{figure*}

\begin{figure*}
  \centering
  \includegraphics[width = 0.32\textwidth]{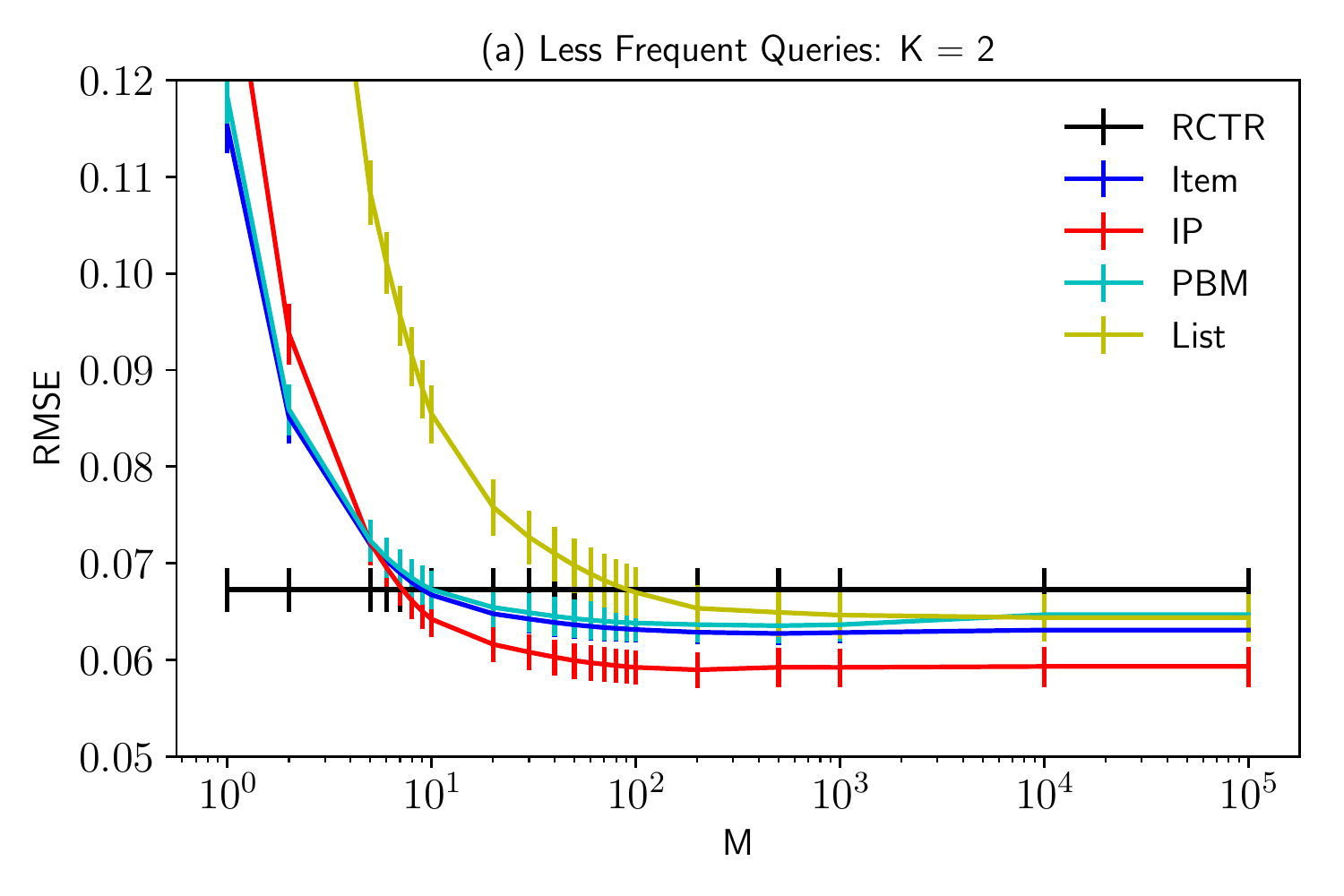}
  \includegraphics[width = 0.32\textwidth]{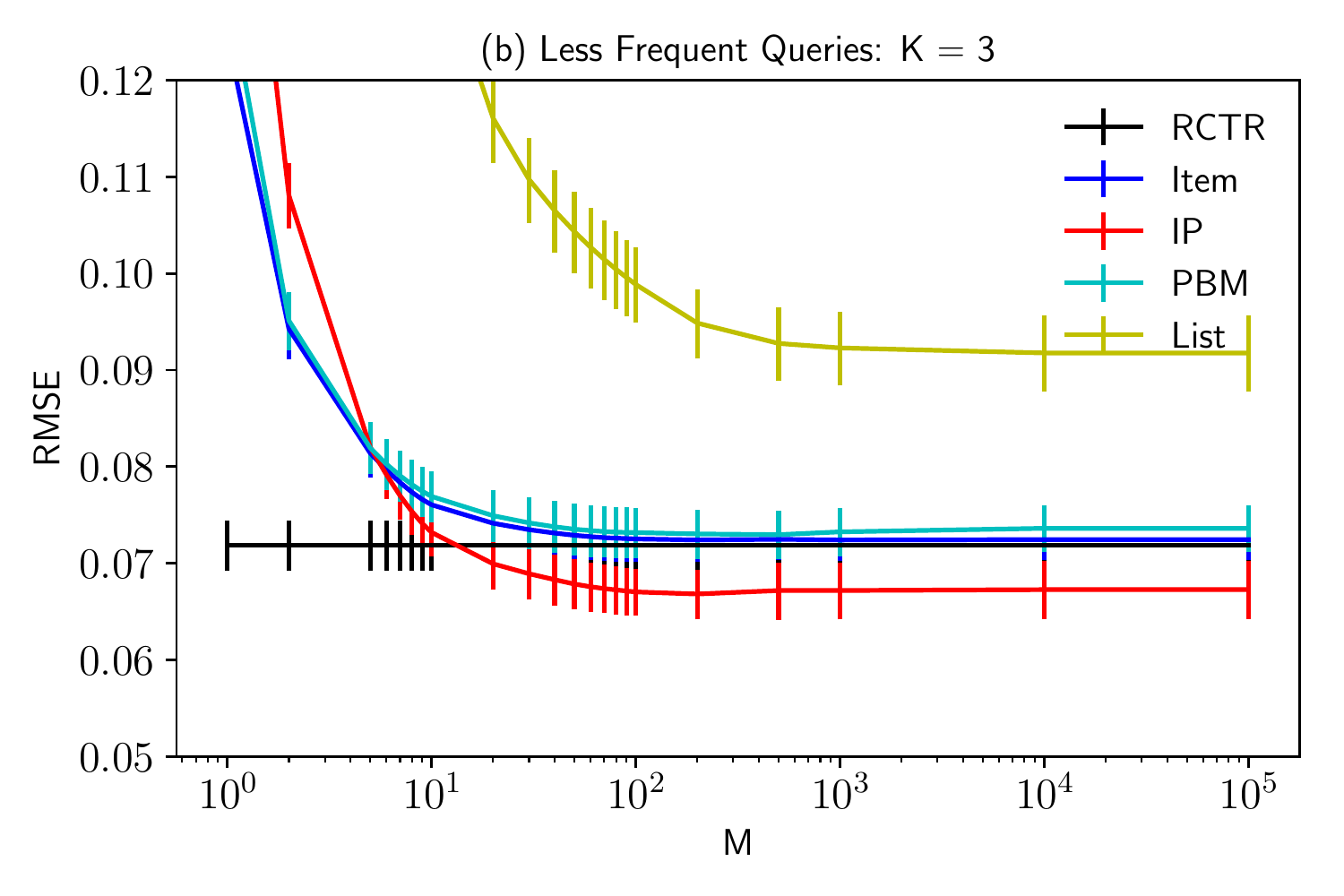}
  \includegraphics[width = 0.32\textwidth]{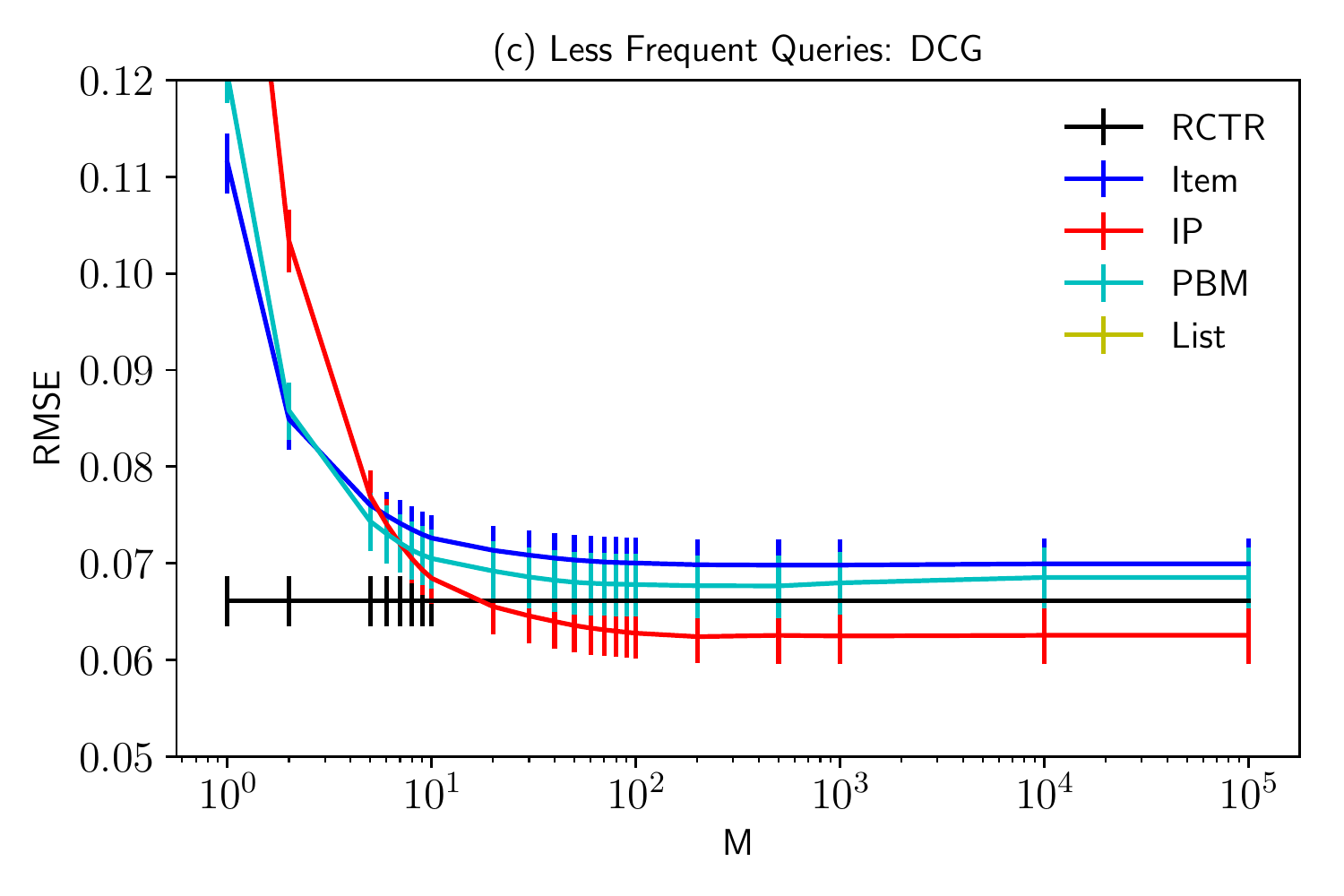}
  \caption{Prediction errors on less frequent queries as a function of clipping parameter $M$.}
  \label{fig:top10kprob}
\end{figure*}

Our second experiment is conducted on $100$ most frequent queries. The number of records in these queries ranges from $15\text{k}$ to $455\text{k}$, and the number of distinct lists ranges from $69$ to $10\text{k}$. This experiment validates our findings from \cref{sec:illustrative query} at a larger scale.

The errors of all estimators are reported in \cref{fig:top100prob}. The errors are averaged over all queries and days, as described in \cref{sec:experimental setup}. Similarly to \cref{sec:illustrative query}, we observe that our structured IP estimator outperforms both of our baselines. For any $M \geq 100$, the error of the IP estimator is at least $17.90\%$ ($K = 2$), $46.24\%$ ($K = 3$), $81.96\%$ (DCG) lower than that of the list estimator. The performance of the list estimator worsens dramatically from $K = 2$ to $K = 3$ because the number of distinct lists over three positions is typically much larger than over two. For any $M \geq 100$, the error of the IP estimator is at least $13.18\%$ ($K = 2$), $12.50\%$ ($K = 3$), $10.65\%$ (DCG) lower than that of the RCTR estimator.

\subsection{Less Frequent Queries}
\label{sec:less frequent queries}

Our last experiment is conducted on the tail $900$ queries from $1\text{k}$ most frequent queries. These queries are much less frequent than those in \cref{sec:hundred most frequent queries}, some with as few as $3\text{k}$ records.

The errors of all estimators are reported in \cref{fig:top10kprob}. We observe that the absolute errors of all estimators increase as we consider less frequent queries. This is expected since less frequent queries provide less training data. Nevertheless, our estimators still improve over baselines. In particular, the IP estimator improves consistently over both the RCTR and list estimators.

\subsection{Bias in the Yandex dataset}
\label{sec:yandex bias}

\begin{figure}
  \centering
  \includegraphics[width = 0.35\textwidth]{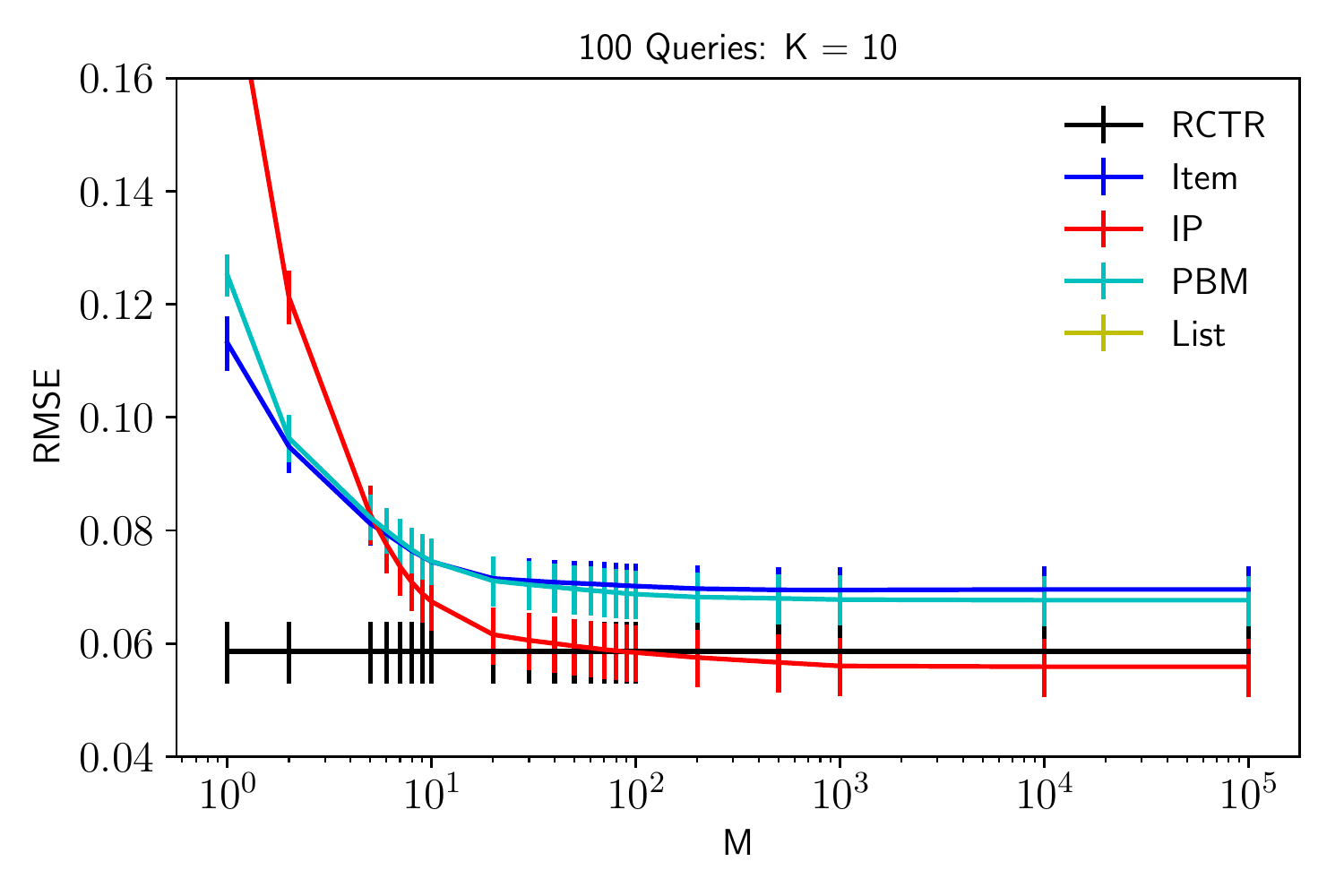}
  \caption{The errors in predicting the expected number of clicks at all $K = 10$ positions on $100$ most frequent queries, as a function of clipping parameter $M$.}
  \label{fig:100_k10}
\end{figure}

\begin{figure}
  \centering
  \includegraphics[width = 0.35\textwidth]{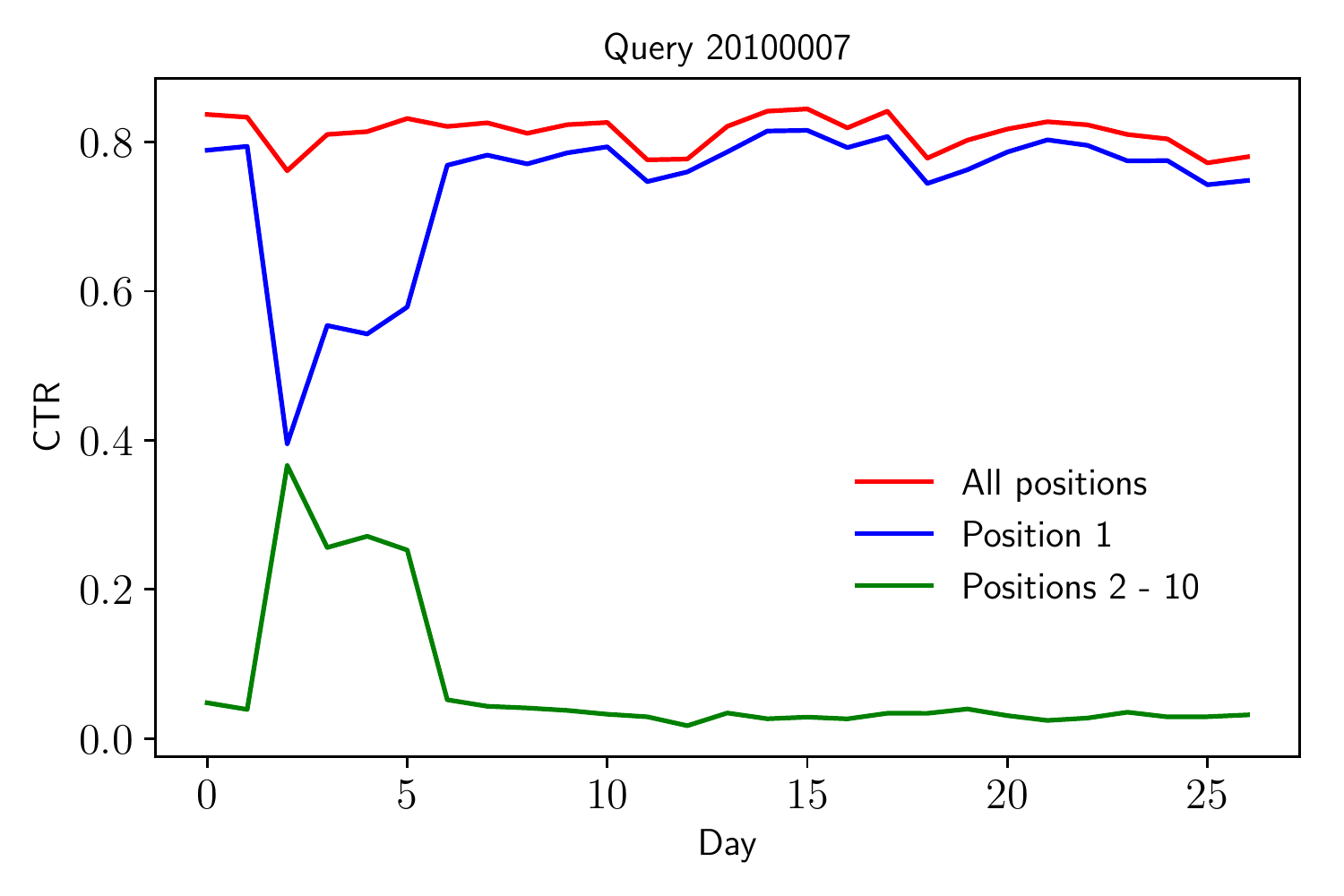}
  \caption{Expected number of clicks in query $20100007$ as a function of time, in days.}
  \label{fig:query7ctr}
\end{figure}

In our initial experiments, we estimated the expected number of clicks at all $K = 10$ positions. Our estimators performed poorly (\cref{fig:100_k10}). More specifically, only the IP estimator improved over the RCTR estimator, and that improvement was minimal.

We investigated this issue and found a very strong bias in the Yandex dataset, which we explain below. Most of our improvements over the RCTR baseline in the previous sections are due to queries whose responses change over time. One such query is shown in \cref{fig:query7ctr}, where the expected number of clicks at position $1$ drops between days $3$ and $5$. If the drop was due to an unattractive item that was placed at position $1$, the drop could be predicted if that item had been unattractive in the production set before. Also note that the expected number of clicks at all positions does not drop between days $3$ and $5$. Therefore, the RCTR estimator performs well at all positions. This changes when the positions are weighted, and we outperform it at $K = 2$, $K = 3$, and with the DCG.

The above study shows that interesting dynamics in data are necessary to outperform naive baselines. This is not surprising. If the expected number of clicks in the production and evaluation sets is similar, even simple estimators become strong baselines and we do not expect to outperform them.

Note that our estimators are data-driven, and require an overlap in the production and evaluated policies. Therefore, our approach is unsuitable for previously unseen queries. We also do not expect to perform well on infrequent queries.


\section{Related Work}
\label{sec:related work}

The work described in the current paper is at the intersection of two areas. Reliable and efficient offline evaluation has been studied extensively in the bandit context, which we describe first. Then we discuss the prior work on click models, which are the starting point of our estimators, and have been shown to be representative of user behavior in various scenarios.

The problem of offline evaluation in the contextual bandit setting was first studied by Langford \etal~\cite{langford2008exploration} who provided an estimator for a stationary policy. This estimator used a variant of importance sampling and assumed that the policy does not depend on context. Many papers \cite{strehl2010learning,dudik2011doubly,li2011unbiased,nicol2014improving} followed by relaxing the assumptions of this work, improving robustness, and reducing the variance of offline evaluations. None but two considered the structure of lists.

Swaminathan \etal~\cite{swaminathan2016off} studied a similar click model to the IP model (\cref{sec:item-position estimator}) but with bandit feedback. In the context of lists, bandit feedback can be viewed as the total number of clicks on a list. Semi-bandit feedback, which we consider, are the indicators of clicks on each displayed item. The latter model of feedback is more informative, and Swaminathan \etal~\cite{swaminathan2016off} showed in their appendix that it can lead to better results, though they did not analyze their IP estimator in detail. We study the theoretical properties of several structured estimators and evaluated them at scale.

Early work in click-based evaluation in information retrieval \cite{joachims2005accurately,richardson2007} showed that higher ranks are more likely to be viewed and examined by users, and thus more likely to be clicked. Understanding and modeling of user behavior allows us to have evaluation methods that are more tolerant to the noise in behavioral data. Hofmann \etal~\cite{hofmann2016} conducted a comprehensive survey on efficient and reliable online evaluation of ranked lists. We focus on the offline evaluation aspect using click models.

Numerous click models have been proposed \cite{craswell2008,dupret2008,guo2009,chapelle2009}, including those that we introduce in \cref{sec:estimators}, and some models are comprehensive enough to explain finer details of user behavior. A generative model of clicks allows the evaluation of candidate ranking policies, and therefore reduce the dependence on expensive editorial judgments \cite{dupret2007}. Hofmann \etal~\cite{hofmann2012} used a similar importance sampling driven method to leverage historical comparisons of ranking policies to predict the outcomes of future comparisons. Click models usually have latent variables. Therefore, their parameters are estimated with an iterative EM-like procedure that lacks theoretical guarantees. In this work, we do not explicitly fit a click model. We only use the structure of the click model to represent the same independence assumptions as those in that model.

The most relevant related area to this paper is unbiased learning-to-rank and evaluation from logged data. Joachims \etal~\cite{joachims2017unbiased} use the sum of ranks of relevant items as a metric and Wang \etal~\cite{wang2018position} use the precision of clicked items, while we consider more general reward functions. They both focus on the PBM, which is only one instance of a broad class of click models. Our experimental results show that the PBM is not necessarily the best model for offline evaluation. Though we focus on evaluation \cite{gilotte2018offline}, we show in \cref{sec:policy optimization} that our estimators can be used for policy optimization. Combining these estimators with models trained by batch offline processes for the many learning-to-rank objectives \cite{radlinski2008does} is an interesting future direction.

Some works in information retrieval also consider user models to design metrics for ranked lists \cite{chuklin2013click,moffat2008rank,chapelle2009expected,carterette2011system,wang2009pskip,yilmaz2010expected}. Those papers do not consider the counterfactual imbalance between logged data and a new production policy, and have no theoretical guarantees in our setting.


\section{Conclusions}
\label{sec:conclusions}

We propose various estimators for the expected number of clicks on lists generated by ranking policies that leverage the structure of click models. We prove that our estimators are better than the unstructured list estimator, in the sense that they are less biased and have better guarantees for policy optimization. They also consistently outperform the list estimator in our experiments.

Our work can be extended in multiple directions. For instance, our key assumption is that the reward function $f(A, w)$ is linear in the contributions of individual items in $A$. Such functions cannot model many interesting non-linear metrics, such as the indicator of at least one click. Another potential direction for extending our work is to generalize it to click models with partial observations, such as the cascade model \cite{craswell2008}. The main challenge in the cascade model is that the item may not be clicked due to more attractive higher-ranked items, not because it is unattractive. This phenomenon is not captured by any of our estimators.

Our estimators need to be evaluated better empirically. To the best of our knowledge, the Yandex dataset is the only public click dataset that is both large-scale and comprises clicks on individual items in recommended lists. Therefore, a better evaluation could not be done in this paper.

We also want to comment on the generality of our result. Since the reward function $f(A, w)$ is linear in the contributions of individual items in $A$ and we do not make independence assumptions on the entries of $\bw \sim D(\cdot \mid x)$, our work solves the problem of offline evaluation in stochastic combinatorial semi-bandits \cite{gai12combinatorial,chen13combinatorial,kveton15tight,wen15efficient}. Therefore, our methods can be used to estimate the values of paths in graphs from semi-bandit feedback, for instance.

\bibliographystyle{ACM-Reference-Format}
\bibliography{ref}

\end{document}